\newenvironment{customprop}[1]
  {\innercustomthm}
  {\endinnercustomthm}
\newtheorem{example}{Example}
\newtheorem{definition}{Definition}
\def\arga{\ensuremath{\mathsf{a}}}
\def\argb{\ensuremath{\mathsf{b}}}
\def\argc{\ensuremath{\mathsf{c}}}
\def\argd{\ensuremath{\mathsf{d}}}
\def\arge{\ensuremath{\mathsf{e}}}
\def\argf{\ensuremath{\mathsf{f}}}
\def\aga{\ensuremath{{\cal A}_0}}
\def\agb{\ensuremath{{\cal A}_1}}
\def\agc{\ensuremath{{\cal A}_2}}
\def\agi{\ensuremath{{\cal A}_i}}
\def\agj{\ensuremath{{\cal A}_j}}
\begin{document}
\title[Disagree and Commit: Degrees of Argumentation-based Agreements]{Disagree and Commit: Degrees of Argumentation-based Agreements\footnote{This paper is based on preliminary results of an extended abstract originally presented at the 20th Conference on Autonomous Agents and MultiAgent Systems, AAMAS'21 \cite{DBLP:conf/atal/KampikN21}.}}

\author*[1]{\fnm{Timotheus} \sur{Kampik}}\email{tkampik@cs.umu.se}
\author[1]{\fnm{Juan Carlos} \sur{Nieves}}\email{jcnieves@cs.umu.se}
\affil*[1]{\orgdiv{Department of Computing Science}, \orgname{Umeå University}, \orgaddress{\city{Umeå}, \country{Sweden}}}

\abstract{
In cooperative human decision-making, agreements are often not total; a partial degree of agreement is sufficient to commit to a decision and move on, as long as one is somewhat confident that the involved parties are likely to stand by their commitment in the future, given no drastic unexpected changes.
In this paper, we introduce the notion of \emph{agreement scenarios} that allow artificial autonomous agents to reach such agreements, using formal models of argumentation, in particular abstract argumentation and value-based argumentation.
We introduce the notions of degrees of satisfaction and (minimum, mean, and median) agreement, as well as a measure of the impact a value in a value-based argumentation framework has on these notions. We then analyze how degrees of agreement are affected when agreement scenarios are expanded with new information, to shed light on the reliability of partial agreements in dynamic scenarios. An implementation of the introduced concepts is provided as part of an argumentation-based reasoning software library.}

\keywords{
formal argumentation, dialogues, agreement technologies, group decision-making
}

\maketitle


\section{Introduction}
\label{intro}
In Artificial Intelligence (AI) research, devising formal models and algorithms that specify how autonomous agents can reach agreements is an important research direction~\cite{ossowski2012agreement}.
In this context, the symbolic AI community considers \emph{formal argumentation} approaches~\cite{BENCHCAPON2007619,baroni2018handbook} as particularly promising.
Recently, such approaches have, for example, been proposed to align the moral values of different stakeholders in decision automation scenarios~\cite{liao2019building} and to resolve rule conflicts in medical decision-support systems~\cite{chunli2018}.
From a more generic perspective, recent research has introduced a formal approach to determining \emph{degrees of agreement} in formal argumentation dialogues, in which agents add arguments on a specific topic to a knowledge base~\cite{nieves2019approximating}. The intuition behind this notion is that for practical purposes, it is often not necessary (or possible) to reach a complete agreement; instead, agents may decide that a certain degree of agreement on a given topic is sufficient to commit to roughly aligned decisions and move on.
In management practice, this approach is sometimes referred to as ``\emph{disagree and commit}''~\cite{PATANAKUL2009216}, emphasizing that while discourse is vital, at some point stakeholders will have to align in order to lay the prerequisites for successful execution.
In this paper, we introduce this notion to formal argumentation, in particular to abstract and value-based argumentation, and work towards answering the following research questions about multi-agent agreements in abstract and value-based argumentation:
\begin{enumerate}
    \item How can a set of agents determine to what degree they are agreeing on a topic (set of arguments)?
    \item How do an agent's subjective value preferences impact the degree of agreement on a topic?
    \item How are degrees of agreement affected when agreement scenarios are expanded with new information?
\end{enumerate}
To answer Question 1 and 2, we introduce a formal framework for \emph{agreement scenarios} and \emph{degrees of satisfaction and agreement} to abstract argumentation~\cite{dung1995acceptability}, as well as to value-based argumentation~\cite{bench2003persuasion}. To answer Question 3, we apply and extend formal properties that are systematic relaxations of monotony of entailment and conduct a basic empirical analysis using synthetic data.

Let us introduce examples to illustrate the contribution this paper makes to the research questions.
First, we take a step back, introducing a simple \emph{choice-based} agreement scenario.

\begin{example}[Degrees of Agreement in Simple Choice Scenarios]\label{example:intro}
We have three agents ($\aga, \agb, \agc$), who are C-level managers and discuss which strategic initiatives among $\arga$, $\argb$, and $\argc$ are the most important ones.
Considering the complex socio-professional nature of the problem, reaching full consensus on all questions is intractable.
As long as everyone roughly agrees on the importance, the managers will be content and assume that their objectives are aligned to a sufficient degree.
Table~\ref{table:1}
shows \emph{ranks} and \emph{degrees} of satisfaction of the managers given the different choice options, assuming the agents have established a total preorder of preferences on the powerset of the set of all options\footnote{A total preorder on a set $S$ is a binary relation $\succeq$ on $S$, s.t. for all $x,y,z \in S$, \emph{i)} $x \succeq x$ (reflexivity); \emph{ii)} $x \succeq y$ and $y \succeq z$ imply $x \succeq z$ (transitivity); \emph{iii)} $x \succeq y$ or $y \succeq x$ (totality).}.
Here, we assume that the option of Rank 1 in Table~\ref{table:1} is an agent's most preferred option and the ranks of all other options are inferred from this option.
Table~\ref{table:1} assumes that the agents care about agreement with respect to the inclusion as well as exclusion of options: here, we may assume that the ranking is based on a similarity measure between the most preferred and other options. For example, we may measure similarity between $\{\argb, \argc\}$ and $\{\argb\}$ by computing the number of joint options in plus the number of joint options out, divided by the total number of options, i.e. $\frac{|\{\argb, \argc\} \cap \{\argb\}| + | \{\arga\} \cap \{\arga, \argc\}|}{|\{\arga, \argb, \argc\}|} = \frac{2}{3}$, and between $\{\argb, \argc\}$ and $\{\arga, \argc\}$ by computing $\frac{|\{\argb, \argc\} \cap \{\arga, \argc\}| + | \{\arga\} \cap \{\argb\}|}{|\{\arga, \argb, \argc\}|} = \frac{1}{3}$. Hence, given the most preferred option $\{\argb, \argc\}$, $\agb$ ranks $\{\argb\}$ higher than $\{\arga, \argc\}$.

\begin{table*}
\caption{Preference ranks and degrees of satisfaction (\emph{deg.} in parentheses) of agents $\aga$, $\agb$, and $\agc$ as total preorders on $2^{\{\arga, \argb, \argc\}}$, assuming one most preferred set of options and a similarity measure that is \emph{sensitive} to options that are jointly not inferred.}
\label{table:1}
\renewcommand{\arraystretch}{1.0}
\centering
\begin{tabular}{ |c|c|c|c|} 
 \hline
 Rank (deg.) & $\aga$ & $\agb$  & $\agc$\\ 
  \hline
 $1 (1)$ & $\{\arga, \argb, \argc\}$ & $\{\argb, \argc\}$ & $\{\}$ \\ 
 $2 (\frac{2}{3})$ & $\{\arga, \argb\}$ or $\{\arga, \argc\}$ or $\{\argb, \argc\}$ & $\{\arga, \argb, \argc\}$ or $\{\argb\}$ or $\{\argc\}$ & $\{\arga\}$ or $\{\argb\}$ or $\{\argc\}$ \\ 
 $3 (\frac{1}{3})$ & $\{\arga\}$ or $\{\argb\}$ or $\{\argc\}$& $\{\}$ or $\{\arga, \argb\}$ or $\{\arga, \argc\}$ & $\{\arga, \argb\}$ or $\{\arga, \argc\}$ or $\{\argb, \argc\}$ \\
 $4 (0)$ & $\{\}$ & $\{\arga\}$ & $ \{\arga, \argb, \argc\}$ \\
 \hline
\end{tabular}
\end{table*}

To determine the degree of satisfaction of an agent $\agi$ with another agent's $\agj, i,j \in \{0, 1, 2\}$, position, we determine the maximal similarity of any most preferred option of $\agi$ and any most preferred option of $\agj$ (see Table~\ref{table:2}).

\begin{table*}
\caption{Matrix: one agent's rank and degree of satisfaction (the latter in parentheses) given another agent's choice option, considering preferences from Table~\ref{table:1}.}
\label{table:2}
\centering
\renewcommand{\arraystretch}{1.0}
\begin{tabular}{ |c|c|c|c|} 
 \hline
 \phantom{ } & $\aga$ & $\agb$  & $\agc$\\ 
  \hline
 $\aga$ & $1 (1)$ & $2 (\frac{2}{3})$ & $4 (0)$ \\ 
 $\agb$ & $2 (\frac{2}{3})$ & $1 (1)$ & $3 (\frac{1}{3})$ \\
 $\agc$ & $4 (0)$ & $3 (\frac{1}{3})$ & $1 (1)$ \\ 
 \hline
\end{tabular}
\end{table*}
To determine the degree of agreement between the whole group of agents, we introduce the following approaches:
\begin{itemize}
    \item The \textbf{degree of minimal agreement} is the lowest of the degrees of satisfaction of all agents given an option that yields the maximal lowest degree of satisfaction among all agents. In the example scenario the degree of minimal agreement is $\frac{1}{3}$, e.g., provided by option $\{\arga, \argc\}$; in our example all options that provide a degree of satisfaction of greater than $0$ to all agents, i.e., all options except $\{\}$, $\{\arga\}$, and $\{\arga, \argb, \argc\}$ provide the degree of minimal agreement.
    \item The \textbf{degree of mean agreement} is the mean of the degrees of satisfaction of all agents, given an option that allows for a maximal mean of the degrees of satisfaction among all agents. The degree of mean agreement is $\frac{2}{3}$: the option $\{\argb, \argc\}$ provides the degrees of satisfaction $\frac{2}{3}$ to $\aga$, $1$ to $\agb$, and $\frac{1}{3}$ to $\agc$, averaging at $\frac{2}{3}$.
    \item Similarly, the \textbf{degree of median agreement} of the example is $\frac{2}{3}$, e.g., the median of $\langle 1, \frac{2}{3}, 0 \rangle$, given the option $\{\arga, \argb, \argc\}$ (other options, such as $\{\argb, \argc\}$ or $\{\argb\}$ would work as well).
\end{itemize}
\end{example}

The degrees of agreement can then, for instance, inform decisions on whether to further deliberate a given topic---in the example, the strategic initiatives---or guide future decisions of the involved participants; in our case, the lack of management alignment as indicated by Tables~\ref{table:1}~and~\ref{table:2} 
should cause each manager to be careful when making any future strategy-related decision.

Another aspect that can inform future decisions is how reliably the agents will keep their opinions given some constraints.
This requires the analysis of the agents' decision processes, either by means of observation or---in particular in the case of artificial agents/computer systems---by formal verification.
A straight-forward approach is to simply check whether the preferences of an agent are consistent over time, a property emerging from economic rationality\footnote{Let us note that our agents are not economically rational, given that the emerging preferences from extension-based abstract argumentation are not necessarily consistent, as shown in~\cite{10.1093/logcom/exab003}.}; in our context, we consider specific \emph{principles} that a function governing an agent's decision-making may satisfy.
To also account for the \emph{knowledge} the agents use to establish their preferences, we apply abstract argumentation~\cite{dung1995acceptability} to model the agents' inference processes.
In the context of our agreement problem, we consider the choice items a subset of the \emph{arguments} (atomic items) of an abstract argumentation framework; we call this subset of arguments the \emph{topic}. Based on an argumentation framework's arguments and their attack relation (binary relation on the arguments), an \emph{argumentation semantics} determines which sets of arguments can be considered valid conclusions; these sets of arguments are called \emph{extensions}.
Given an argumentation-based model of an agreement scenario, we can impose formal constraints on argumentation semantics that allow us to guarantee that---under specific conditions---the degree of agreement between a group of agents remains within specific bounds as new arguments are added to an argumentation framework.
Let us extend the previous example to illustrate what an argumentation-based agreement scenario is.
\begin{example}[Degrees of Agreement in Argumentation Scenarios]\label{example-2}
Let us assume that the agents have jointly constructed the argumentation framework as depicted in Figure~\ref{fig:example-1}, but they use different inference functions (argumentation semantics) to reach their conclusions (to determine extensions of arguments).
Note that the ``self-attacking'' arguments $\argd$ and $\arge$---in our case chosen to highlight differences between semantics in a very simple example---may model knowledge that is self-contradictory, but still attempts to defeat some of the topic arguments.
\begin{figure}[!ht]
        \centering
        \begin{tikzpicture}[
            scale=0.7, every node/.style={scale=0.7},
            noanode/.style={dashed, circle, draw=black!60, minimum size=10mm, font=\bfseries},
            unanode/.style={circle, draw=black!75, minimum size=10mm, font=\bfseries},
            anode/.style={circle, fill=lightgray, draw=black!75, minimum size=10mm, font=\bfseries},
            ]
            \node[anode]    (a)    at(0,4)  {\arga};
            \node[anode]    (b)    at(2,4)  {\argb};
            \node[anode]    (c)    at(4,4)  {\argc};
            \node[noanode]    (d)    at(0,2)  {\argd};
             \node[noanode]    (e)    at(2,2)  {\arge};
            \path [->, line width=0.5mm]  (d) edge node[left] {} (a);
            \path [->, line width=0.5mm]  (e) edge node[left] {} (b);
            \path [->, line width=0.5mm]  (e) edge node[left] {} (c);
            \path [->, line width=0.5mm]  (b) edge node[left] {} (e);
            \path [->, line width=0.5mm]  (c) edge node[left] {} (e);
            \draw[->, line width=0.5mm] (d.-90) arc (180:180+264:4mm);
            \draw[->, line width=0.5mm] (e.-90) arc (180:180+264:4mm);
        \end{tikzpicture}
\caption{Abstract argumentation framework (Example~\ref{example-2}). Here and henceforth, arguments in gray are in all extensions (here, assuming stage semantics); arguments with dashed border are in no extension. Arguments in white with a solid border would be in some, but not in all extensions (cf. Figure~\ref{fig:example-2}) or indicate that no semantics has been applied to infer extensions (cf. Figure~\ref{fig:example-exp}).}
\label{fig:example-1}
\end{figure}
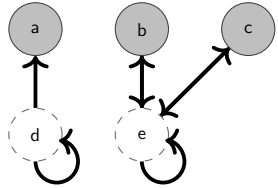
If the agents were to use the following argumentation semantics, they would reach the conclusions (i.e., infer exactly the extensions) as presented by the highest ranked options in Table~\ref{table:1}: $\aga$: stage semantics~\cite{verheij1996two}; $\agb$: preferred semantics~\cite{dung1995acceptability}; $\agc$: grounded semantics~\cite{dung1995acceptability} (to be formally introduced in Section~\ref{prelim}).
To reflect the ranks and degrees of satisfaction in Table~\ref{table:1}, an agent can determine their preferences using measures of similarity between any set of choice options (let us call them \emph{topic arguments} in the context of formal argumentation) and the most similar topic arguments returned by the agent's argumentation semantics, as formalized in Section~\ref{sec:argumentation}.
\end{example}
In an argumentation scenario, the agents can then make informed decisions on how reliable an agreement is, based on formal argumentation principles that are relaxed forms of monotony and ensure the following properties when \emph{normally expanding} an argumentation framework (adding new arguments without changing the relationships between existing arguments):
\begin{enumerate}
    \item \emph{Weak cautious monotony:} if no new argument attacks a specific extension of the original argumentation framework, every argument in this extension is also in an extension of the argumentation framework's normal expansion~\cite{10.1093/logcom/exab003}.
    \item \emph{Strong relaxed monotony:} if no \emph{unattacked} argument directly or indirectly (without ``interruption'') attacks a specific extension of the original argumentation framework, every argument in this extension is also in an extension of the argumentation framework's normal expansion. This paper introduces the principle to demonstrate that agents may commit to enforcing rather strict principles even if this implies the violation of the behavior of the semantics they originally employ, intuitively to ensure that inferences remain more steady (change less) in dynamic scenarios and to better align the way inferences are drawn across agents.
    \item In addition, we introduce an abstract class of principles, which we call \emph{relaxed monotony principles} (and which includes the aforementioned principles), for which we show that an upper bound of change w.r.t. the degree of minimal agreement can be guaranteed, given that the relaxed monotony condition is not infringed.
\end{enumerate}
For example, we can see that preferred semantics does not satisfy the strong relaxed monotony principle; adding a new argument $\argf$ to the argumentation framework in Figure~\ref{fig:example-1}, such that $\argf$ attacks itself, as well as $\argb$ and $\argc$ would cause agent $\agb$ to consider only $\{\}$ an extension of the new argumentation framework.
To also support semantics that do not satisfy a specific relaxed monotony principle, we introduce an approach that allows agents to commit to a principle in disregard of the properties of the semantics they apply.

Let us introduce another example to illustrate that a similar approach can be used to determine degrees of satisfaction and agreement in the context of value-based argumentation~\cite{bench2003persuasion}, where abstract argumentation frameworks are extended by values associated with arguments and agents' subjective value preferences. %
In the abstract argumentation example above, the differences between the inferred extensions arise mostly due to nuances in the behaviors of the semantics that are applied. In contrast, in value-based argumentation we can lift such differences to the knowledge modeling level, which yields more applicable (albeit technically slightly less straightforward) perspectives.
While we remain on an ``abstract'' level in the example below, for which we do not provide a real-world interpretation, value-based argumentation typically leads to precisely the multi-agent disagreements that we cover in this paper and is, for example, applied\footnote{Let us note that we are referring to somewhat ``academic'' applications here and not to large-scale applications in real-world software systems.} to democratic decision support~\cite{DBLP:conf/comma/Atkinson06} and regulatory compliance~\cite{DBLP:journals/ail/BurgemeestreHT11}. 
\begin{example}[Degrees of Agreement in Value-based Argumentation Scenarios]\label{example-3}
We start with the argumentation framework $AF = (\{\arga, \argb, \argc,  \argd\}, \{(\arga, \argb), (\argb, \arga), (\argc, \argb), (\argd, \argc)\})$ as depicted in Figure~\ref{fig:example-2}.
\begin{figure}[!ht]
        \centering
        \begin{tikzpicture}[
            scale=0.7, every node/.style={scale=0.7},
            noanode/.style={dashed, circle, draw=black!60, minimum size=10mm, font=\bfseries},
            unanode/.style={circle, draw=black!75, minimum size=10mm, font=\bfseries},
            anode/.style={circle, fill=lightgray, draw=black!75, minimum size=10mm, font=\bfseries},
            ]
            \node[unanode]    (a)    at(0,4)  {\arga};
            \node[unanode]    (b)    at(2,4)  {\argb};
            \node[noanode]    (c)    at(2,2)  {\argc};
            \node[anode]    (d)    at(0,2)  {\argd};
            \path [->, line width=0.5mm]  (a) edge node[left] {} (b);
            \path [->, line width=0.5mm]  (b) edge node[left] {} (a);
            \path [->, line width=0.5mm]  (c) edge node[left] {} (b);
            \path [->, line width=0.5mm]  (d) edge node[left] {} (c);
        \end{tikzpicture}
\caption{Abstract argumentation framework (Example~\ref{example-3}).}
\label{fig:example-2}
\end{figure}
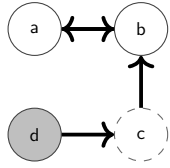
Each argument in $AF$ is mapped to a value in $\{ a_v, b_v, c_v, d_v \}$, for the sake of simplicity as follows: argument $\arga$ to value $a_v$, $\argb$ to $b_v$, $\argc$ to $c_v$, and $\argd$ to $d_v$.
Each of three agents ($\aga$, $\agb$, $\agc$) has established an additional binary relation (\emph{value preferences}) on the values in $AF$; intuitively, the abstract argumentation framework and argument-to-value-mapping models the objective facts, whereas the value preferences model subjective beliefs, ultimately about the effective strength of the arguments.
The agents have the following value preferences:
\begin{itemize}
    \item $\aga$: $a_v$ is preferred over $b_v$;
    \item $\agb$: $b_v$ is preferred over $a_v$;
    \item $\agc$: $c_v$ is preferred over $d_v$.
\end{itemize}
When interpreting these preferences in accordance with value-based argumentation (i.e., removing, given an agent's preferences, all attacks in which the attacked argument's value is preferred over the attacking argument's value\footnote{Alternative approaches to \emph{preference-based argumentation} exist that take a more nuanced approach to handling attacks in face of conflicting preferences~\cite{kaci2018preference,AMGOUD2014585,10.1093/logcom/exac094}. Preference-based argumentation is a special case of value-based argumentation; still, we may claim the these approaches can be relatively straightforwardly adjusted to cover value-based argumentation and hence be applied to our value-based argumentation-based agreement scenarios. However, we abstain from going into further detail in order to avoid scope creep.}), we get the following \emph{subjective} argumentation frameworks $AF_i, 0 \leq i \leq 2$ and extensions $ES_i$ (assuming preferred semantics) for each agent $\agi$:
\begin{itemize}
    \item $AF_0 = (\{\arga, \argb, \argc,  \argd\}, \{(\arga, \argb), (\argc, \argb), (\argd, \argc)\})$, $ES_0 = \{\{\arga, \argd\}\}$;
    \item $AF_1 = (\{\arga, \argb, \argc,  \argd\}, \{(\argb, \arga), (\argc, \argb), (\argd, \argc) \})$, $ES_1 = \{\{\argb, \argd\}\}$;
    \item $AF_2 = (\{\arga, \argb, \argc,  \argd\}, \{(\arga, \argb), (\argb, \arga), (\argc, \argb)\})$, $ES_2 = \{\{\arga, \argc, \argd\}\}$.
\end{itemize}
This allows us to again determine the degrees of satisfaction, as well as the degrees of agreement between the agents, using the same approaches we have defined for abstract argumentation. Here, we are interested in all arguments, i.e., our topic is $\{\arga, \argb, \argc, \argd\}$. The degrees of minimal, mean, and median agreement are $\frac{1}{2}$, $\frac{3}{4}$ and $\frac{3}{4}$, respectively, using the similarity measure sketched in Example~\ref{example:intro}.
Moreover, we can determine the \emph{impact} that a value has on the degrees of satisfaction and agreement in a given scenario, by determining the delta between the actual degree and the counterfactual, assuming the given value was not present.
For example, in our scenario the impact of the value $b_v$ on the degree of minimal agreement is $-\frac{1}{4}$, as ``removing'' the value $b_v$ from our scenario increases the degree of minimal agreement from $\frac{1}{2}$ to $\frac{3}{4}$.
\end{example}
To guarantee limits in the change of degrees of agreement when expanding a value-based argumentation framework in agreement scenarios under some constraints, we can rely on the weak cautious monotony principle.
Let us highlight that in value-based argumentation-based agreement scenarios, the different value preferences may represent different agents, but they may also model different \emph{agent-internal} perspectives as suggested by some theories of human cognition, such as the theory of planned behavior~\cite{AJZEN1991179}.

The rest of this paper is organized as follows.
Section~\ref{prelim} introduces the theoretical foundations of our work.
Then, Section~\ref{sec:argumentation} provides a formal model of argumentation-based agreement scenarios and degrees of agreements in abstract argumentation, which is extended for expansion-based dialogues and augmented by a formal analysis in Section~\ref{exp-aas}.
Analogously, the framework is extended to value-based argumentation in Section~\ref{vba} and expansion-based dialogues for value-based argumentation are covered in Section~\ref{exp-vaas}.
A software implementation of our approach\footnote{The implementation of the formal concepts that we introduce is available as part of the DiArg argumentation-based dialogue reasoner~\cite{kampik2020diarg} on GitHub at \url{http://s.cs.umu.se/mhfrcp}.} is presented in Section~\ref{sec:implementation}, alongside with initial experiments that shed some light on the theoretical parts of our work from an empirical perspective.
Finally, Section~\ref{related} discusses our results in the context of related research before Section~\ref{questions} concludes the paper by highlighting future research potential.

\section{Theoretical Preliminaries}
\label{prelim}
Let us introduce the formal preliminaries that are relevant in the context of this paper.
An (abstract) argumentation framework is a tuple $AF = (AR, AT)$, where $AR$ is a set of elements (called \emph{arguments}) from our background argument set $\cal AR$, and $AT$ is a binary relation (called \emph{attacks}) on $AR$ (i.e., $AR \subseteq AR \times AT$)~\cite{dung1995acceptability}. We denote the class of all argumentation frameworks by ${\cal F}$. We assume that the set of arguments in an argumentation framework is finite. For $\arga, \argb \in AR$ such that $(\arga, \argb) \in AT$, we say that ``$\arga$ attacks $\argb$''.
Given $S \subseteq AR$, we define $S^+$ as $\{\argc | \argc \in AR, \exists \argd \in S$, such that $\argd$ attacks $\argc\}$; given $\arge \in AR$, we say that ``$S$ attacks $\arge$'' iff $\exists \arge' \in S$, such that $\arge'$ attacks $\arge$.
$a \in AR$ is \emph{acceptable} w.r.t. $S$ iff for each $\argb \in AR$ it holds true that if $\argb$ attacks $\arga$, then $\argb$ is attacked by $S$.
$S \subseteq AR$ is:
\begin{itemize}
    \item \emph{conflict-free} iff $\nexists \arga, \argb \in S$, such that $\arga$ attacks $\argb$;
    \item \emph{admissible} iff $S$ is conflict-free and each argument in $S$ is acceptable w.r.t. $S$.
\end{itemize}
We say that a set $S \subseteq AR$ \emph{strongly defends} an argument $\arga \in AR$ iff $\forall \argb \in AR$, such that $\argb$ attacks $\arga$, $\exists \argc \in S \setminus \{\arga\}$, such that $\argc$ attacks $\argb$ and $\argc$ is strongly defended by $S \setminus \{\arga\}$.
An (argumentation) semantics $\sigma: {\cal F} \rightarrow 2^{2^{\cal AR}}$ maps an argumentation framework $AF = (AR, AT)$ to $ES \subseteq 2^{AR}$, where every set of arguments $E \in ES$ is called a \emph{$\sigma$-extension} of AF.
Informally, we may say that a semantics (or, indirectly, an agent that uses a semantics) infers one or several extensions from an argumentation framework, or given an extension, we may say that a semantics has inferred the arguments that are in this extension.
$\sigma(AF)$ denotes all $\sigma$-extensions of AF and ${\cal S}$ denotes the class of all argumentation semantics.
Some classical argumentation semantics as introduced by Dung~\cite{dung1995acceptability} are the \emph{complete}, \emph{preferred}, and \emph{grounded} semantics.
\begin{definition}[Admissible Set-based Argumentation Semantics~\cite{dung1995acceptability}]
Given an argumentation framework $AF = (AR, AT)$, a set $S \subseteq AR$ is:
\begin{itemize}
    \item a \emph{complete extension} iff $S$ is admissible and each argument that is acceptable w.r.t. $S$ belongs to $S$. $\sigma_{co}(AF)$ denotes all complete extensions of $AF$;
    \item a \emph{preferred extension} of $AF$ iff $S$ is a maximal (w.r.t. set inclusion) admissible subset of $AR$. $\sigma_{pr}(AF)$ denotes all preferred extensions of $AF$;
    \item a \emph{grounded extension} of $AF$ iff $S$ is the minimal (w.r.t. set inclusion) complete extension of $AF$. $\sigma_{gr}(AF)$ denotes all grounded extensions of $AF$\footnote{Note that an argumentation framework always has exactly one grounded extension.}.
\end{itemize}
\end{definition}
Other semantics exist that are based on the notion of maximal conflict-freeness instead of admissibility~\cite{verheij1996two}.
\begin{definition}[Naive Set-based Argumentation Semantics~\cite{verheij1996two}]
Given an argumentation framework $AF = (AR, AT)$, a set $S \subseteq AR$ is a:
\begin{itemize}
    \item \emph{naive extension} iff $S$ is maximal w.r.t. set inclusion among all conflict-free sets. $\sigma_{na}(AF)$ denotes all naive extensions of $AF$.
\item \emph{stage extension}, iff $S \cup S^{+}$ is maximal w.r.t. set inclusion, i.e., there exists no conflict-free set $S'$ s.t. $S \cup S^+ \subset S' \cup (S')^+$. $\sigma_{st}(AF)$ denotes all stage extensions of $AF$.
\end{itemize}
\end{definition}
Let us revisit the abstract argumentation framework in the introduction to give an intuition of the behavior of different semantics.
\begin{example}
\label{ex:semantics}
    Consider the argumentation framework in Figure~\ref{fig:example-1}, which we can denote as follows:
    $$AF = (\{\arga, \argb, \argc,  \argd, \arge\},  \{(\argb, \arge), (\argc, \arge), (\argd, \arga), (\argd, \argd), (\arge, \argb), (\arge, \argc), (\arge, \arge)\})$$.
    Now, let us determine the complete, preferred, grounded, and stage extensions:
    \begin{itemize}
        \item The complete extensions are $\{\}$ and $\{\argb, \argc\}$: both sets are admissible and all arguments acceptable w.r.t. either set are actually in the corresponding set.
        \item Because $\{\}$, $\{\argb\}$, $\{\argc\}$, and $\{\argb, \argc\}$ are admissible sets and $\{\argb, \argc\}$ is the (only) $\subseteq$-maximal of these sets, it is the only preferred extension.
        \item The grounded extension is the $\subset$-minimal complete extension, i.e. $\{\}$.
        \item Finally, the only maximal conflict-free set is $\{\arga, \argb, \argc\}$, which is, thus, the only stage extension.
    \end{itemize}
\end{example}
In this paper, we examine argumentation dialogues, in which argumentation frameworks are \emph{normally expanded}, i.e., in which arguments are added to an argumentation framework, but no arguments are removed, and no attacks between existing arguments are changed.
\begin{definition}[Argumentation Framework Expansions and Normal Expansions~\cite{baumann2010expanding}]
An argumentation framework $AF' = (AR', AT')$ is:
    \begin{itemize}
        \item  an \emph{expansion} of another argumentation framework $AF = (AR, AT)$ (denoted by $AF \preceq_E AF'$) iff $AF \neq AF'$, $AR \subseteq AR'$ and $AT \subseteq AT'$;
        \item a \emph{normal expansion} of an argumentation framework $AF = (AR, AT)$ (denoted by $AF \preceq_N AF'$) iff $AF \preceq_E AF'$ and $\nexists (\arga, \argb) \in AT' \setminus AT$, such that $a \in AR \land b \in AR$.
    \end{itemize}
\end{definition}
Intuitively, an expansion adds additional arguments or attacks to an argumentation framework and a normal expansion is an expansion which does not add any attacks between two arguments that both have already been present in the initial argumentation framework.
Let us introduce an example to illustrate what expansions and normal expansions are.
\begin{example}
\label{ex:expansions}
Consider the argumentation frameworks in Figure~\ref{fig:example-exp}:
\begin{itemize}
    \item $AF_0 = (\{\arga, \argb, \argc\}, \{(\arga, \argb), (\argb, \argc)\})$;
    \item $AF_1 = (\{\arga, \argb, \argc,  \argd\}, \{(\arga, \argb), (\argb, \arga), (\argb, \argc), (\argd, \arga)\})$;
    \item $AF_2 = (\{\arga, \argb, \argc,  \argd\}, \{(\arga, \argb), (\argb, \argc), (\argd, \arga)\})$.
\end{itemize}
Clearly, $AF_1$ is an expansion of $AF_0$ ($AF_0 \preceq_E AF_1$): it has all arguments and attacks of $AF_0$, but has the additional argument $\argd$ and the additional attacks $(\argd, \arga)$ and $(\argb, \arga)$. Because of the latter attack, $AF_1$ is \emph{not} a normal expansion of $AF_0$: both $\argb$ and $\arga$ already exist in the set of arguments of $AF_0$.

In contrast, $AF_2$ \emph{is} a normal expansion of $AF_0$ ($AF_0 \preceq_N AF_2$): it has all arguments and attacks of $AF_0$ and the extra attack $(\argd, \arga)$ it adds involves at least one argument that has been added with the expansion, in this case $\argd$.
\begin{figure}[!ht]
    \subfloat[$AF_0
    $.\label{subfig:fstar}]{
        \begin{tikzpicture}[scale=0.7,
            noanode/.style={scale=0.7,dashed, circle, draw=black!60, minimum size=10mm, font=\bfseries},
            unanode/.style={scale=0.7,circle, draw=black!75, minimum size=10mm, font=\bfseries},
            invnode/.style={scale=0.7,circle, draw=white!0, minimum size=0mm, font=\bfseries},
            anode/.style={scale=0.7,,circle, fill=lightgray, draw=black!60, minimum size=10mm, font=\bfseries},
            ]
            \node[unanode]    (a)    at(2,2)  {\arga};
            \node[unanode]      (b)    at(0,2)  {\argb};
            \node[unanode]      (c)    at(0,4)  {\argc};
            \path [->, line width=0.5mm]  (a) edge node[left] {} (b);
            \path [->, line width=0.5mm]  (b) edge node[left] {} (c);
        \end{tikzpicture}
    }
    \hspace{10pt}
    \centering
    \subfloat[$AF_1
    $.\label{subfig:fstarc}]{
        \begin{tikzpicture}[scale=0.7,
            noanode/.style={scale=0.7,dashed, circle, draw=black!60, minimum size=10mm, font=\bfseries},
            unanode/.style={scale=0.7,circle, draw=black!75, minimum size=10mm, font=\bfseries},
            invnode/.style={scale=0.7,circle, draw=white!0, minimum size=0mm, font=\bfseries},
            anode/.style={scale=0.7,,circle, fill=lightgray, draw=black!60, minimum size=10mm, font=\bfseries},
            ]
            \node[unanode]    (a)    at(2,2)  {\arga};
            \node[unanode]      (b)    at(0,2)  {\argb};
            \node[unanode]      (c)    at(0,4)  {\argc};
            \node[unanode]      (d)    at(2,4)  {\argd};
            \path [->, line width=0.5mm]  (a) edge node[left] {} (b);
            \path [->, line width=0.5mm]  (b) edge node[left] {} (a);
            \path [->, line width=0.5mm]  (b) edge node[left] {} (c);
            \path [->, line width=0.5mm]  (d) edge node[left] {} (a);
        \end{tikzpicture}
    }
    \hspace{10pt}
    \centering
    \subfloat[$AF_2
    $.\label{subfig:fstard}]{
        \begin{tikzpicture}[scale=0.7,
            noanode/.style={scale=0.7,dashed, circle, draw=black!60, minimum size=10mm, font=\bfseries},
            unanode/.style={scale=0.7,circle, draw=black!75, minimum size=10mm, font=\bfseries},
            invnode/.style={scale=0.7,circle, draw=white!0, minimum size=0mm, font=\bfseries},
            anode/.style={scale=0.7,,circle, fill=lightgray, draw=black!60, minimum size=10mm, font=\bfseries},
            ]
            \node[unanode]    (a)    at(2,2)  {\arga};
            \node[unanode]      (b)    at(0,2)  {\argb};
            \node[unanode]      (c)    at(0,4)  {\argc};
            \node[unanode]      (d)    at(2,4)  {\argd};
            \path [->, line width=0.5mm]  (a) edge node[left] {} (b);
            \path [->, line width=0.5mm]  (b) edge node[left] {} (c);
            \path [->, line width=0.5mm]  (d) edge node[left] {} (a);
        \end{tikzpicture}
    }
\caption{Argumentation frameworks and their expansions: $AF_1$ is an expansion, but not a normal expansion, of $AF_0$; $AF_2$ is a normal expansion of $AF_1$.}
\label{fig:example-exp}
\end{figure}
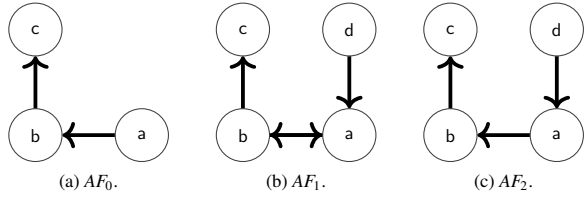
\end{example}
To formally analyze argumentation semantics, a multitude of \emph{argumentation principles} have been defined~\cite{van2017principle}.
In the context of this paper, the weak cautious monotony principle~\cite{10.1093/logcom/exab003} is of particular relevance. %
\begin{definition}[Weak Cautious Monotony~\cite{10.1093/logcom/exab003}\label{def:weak-cm}]
An argumentation semantics $\sigma$ is \emph{weakly cautiously monotonic} iff for every two argumentation frameworks $AF = (AR, AT)$, $AF' = (AR', AT')$, such that $AF \preceq_N AF'$, it holds true that $\forall E \in \sigma(AF)$, if $\{(\arga, \argb) \mid (\arga, \argb) \in AT', \arga \in AR' \setminus AR, \argb \in E \}= \{\}$ then $\exists E' \in \sigma(AF')$ such that $E \subseteq E'$.
\end{definition}
Intuitively, weak cautious monotony expects that in a normal expansion process, we can remove arguments from a previously inferred extension only if new attacks to this extension have been added.
For example, stage semantics violates weak cautious monotony, which we demonstrate using the example below.
\begin{example}
Consider the following argumentation frameworks (Figure~\ref{fig:example-cm}):
\begin{itemize}
    \item $AF' = (\{\arga, \argb\}, \{(\arga, \argb), (\argb, \arga)\})$, with the two stage extensions $\{\arga\}$ and $\{\argb\}$;
    \item $AF'' = (\{\arga, \argb, \argc\}, \{(\arga, \argb), (\argb, \arga), (\argb, \argc), (\argc, \argc)\})$ (note that $AF' \preceq_N AF''$), where the only stage extension is $\{\argb\}$.
\end{itemize}
Our agent may, when applying stage semantics, first decide to infer $\{\arga\}$ from $AF'$: given $\{\arga\}$ and $\{\argb\}$ are equally valid extensions ($\sigma_{st}(AF') = \{\{\arga\}, \{\argb\}\}$), it may make sense to force a decision by random selection for practical purposes, which may cause the agent to select $\{\arga\}$.
However, after normally expanding $AF'$ to $AF''$, the addition of the self-attacking argument $\argc$ is, due to the additional attack from $\argb$ to $\argc$, sufficient to no longer allow for the inference of any argument in the previous extension $\{\arga\}$: $\sigma_{st}(AF'') = \{\{\argb\}\}$. Weak cautious monotony requires a more compelling reason, so to speak: at least a new attacker to any argument in $\{\arga\}$ must have been added, so that such a change of inference is principle-compliant.

Note that this issue does not occur with preferred semantics: $\sigma_{pr}(AF') = \sigma_{pr}(AF'') = \{\{\arga\}, \{\argb\}\}$.
\label{ex:cautious-monotony}
\begin{figure}[!ht]
    \subfloat[$AF'
    $.\label{subfig:fprime}]{
        \begin{tikzpicture}[scale=0.7,
            noanode/.style={scale=0.7,dashed, circle, draw=black!60, minimum size=10mm, font=\bfseries},
            unanode/.style={scale=0.7,circle, draw=black!75, minimum size=10mm, font=\bfseries},
            invnode/.style={scale=0.7,circle, draw=white!0, minimum size=0mm, font=\bfseries},
            anode/.style={scale=0.7,,circle, fill=lightgray, draw=black!60, minimum size=10mm, font=\bfseries},
            ]
            \node[unanode]    (a)    at(0,2)  {\arga};
            \node[unanode]      (b)    at(2,2)  {\argb};
            \node[invnode]      (d)    at(2,-1)  {};
            \path [->, line width=0.5mm]  (a) edge node[left] {} (b);
            \path [->, line width=0.5mm]  (b) edge node[left] {} (a);
        \end{tikzpicture}
    }
    \hspace{10pt}
    \centering
    \subfloat[$AF''
    $.\label{subfig:fpprime}]{
        \begin{tikzpicture}[scale=0.7,
            noanode/.style={scale=0.7,dashed, circle, draw=black!60, minimum size=10mm, font=\bfseries},
            unanode/.style={scale=0.7,circle, draw=black!75, minimum size=10mm, font=\bfseries},
            invnode/.style={scale=0.7,circle, draw=white!0, minimum size=0mm, font=\bfseries},
            anode/.style={scale=0.7,,circle, fill=lightgray, draw=black!60, minimum size=10mm, font=\bfseries},
            ]
            \node[noanode]    (a)    at(0,2)  {\arga};
            \node[anode]      (b)    at(2,2)  {\argb};
            \node[noanode]      (c)    at(2,0)  {\argc};
            \node[invnode]      (d)    at(2,-1)  {};
            \path [->, line width=0.5mm]  (a) edge node[left] {} (b);
            \path [->, line width=0.5mm]  (b) edge node[left] {} (a);
            \path [->, line width=0.5mm]  (b) edge node[left] {} (c);
            \draw[->, line width=0.5mm] (c.-90) arc (180:180+264:4mm);
        \end{tikzpicture}
    }
\caption{Stage semantics violates cautious monotony: expanding $AF'$ with the ``self-attacking'' argument $\argc$ and an attack from $\argb$ to $\argc$ is sufficient so that we can no longer infer any argument of our initial extension $\{\arga\}$. An addition of a direct attack on the extension is not required.}
\label{fig:example-cm}
\end{figure}
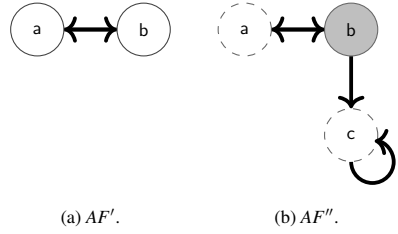
\end{example}
An extension of abstract argumentation is \emph{value-based argumentation}~\cite{bench2003persuasion}. A value-based argumentation-framework is a five-tuple $VAF = (AR, AT, V, val, {\cal P })$, where $AR$ is a finite set of arguments, $AT \subseteq (AR \times AR)$, such that $\forall \arga \in AR, (\arga, \arga) \not \in AT$ (i.e., $AT$ is irreflexive: there are no ``self-attacking'' arguments), $V$ is a non-empty set of \emph{values}, $val$ is a function that maps elements of $AR$ to elements of $V$ (i.e., given any $\arga \in AR$, $val(\arga)$ returns a value in $V$), and $ {\cal P }$ is a sequence (totally ordered multiset) of binary relations on $V$ (called \emph{value preferences}), ${\cal P } = \langle P_0, ..., P_n \rangle$, such that every binary relation $P_i, 0 \leq i \leq n$ is transitive, irreflexive, and asymmetric (\emph{i)} $\forall x,y,z \in V$, if $(x, y) \in P_i  \land (y, z) \in P_i$ then $(x, z) \in P_i$; \emph{ii)} $(x, x) \not \in P_i$; \emph{iii)}  if $(x, y) \in P_i$ then $(y, x) \not \in P_i$); $AF_{P_i}$ is the  argumentation framework $(AR, AT')$, such that $AT' = \{(\arga, \argb) | (\arga, \argb) \in AT,  (val(\argb), val(\arga)) \not \in P_i \}$. We call $AF_{p_i}$ the \emph{subjective} argumentation framework of $P_i$ w.r.t. $VAF$.
For an example of a value-based argumentation framework, we refer the reader back to Example~\ref{example-3}.

%
\section{Degrees of Agreements in Abstract  Argumentation}
\label{sec:argumentation}
%
Let us formalize the intuitions introduced in Section~\ref{intro}, starting with the notion of an argumentation-based agreement scenario.
\begin{definition}[Argumentation-based Agreement Scenario (AAS)]
    An argumentation-based agreement scenario is a tuple $AAS = (AF, T, SIG)$, where $AF = (AR, AT)$ is an argumentation framework, $T \subseteq AR$ and $SIG = \langle \sigma_0, ... , \sigma_n \rangle$, such that each $\sigma_i, 0 \leq i \leq n$ is an argumentation semantics. We say that $T$ denotes the \emph{topic} of $AAS$.
\end{definition}
Intuitively, an agreement scenario contains the argumentation framework that (typically several) agents, each represented by the argumentation semantics, infer extensions from; the topic is the subset of arguments in the argumentation that the agents want to agree on. Arguments that are not in the topic play, in contrast, a merely auxiliary role in that they may have an impact on which topic arguments are inferred. Let us recall the motivating example.
\begin{example}
    In Example~\ref{example:intro} (see Figure~\ref{fig:example-1} for the argumentation framework), we have the agreement scenario $((\{\arga, \argb, \argc,  \argd, \arge\},  \{(\argb, \arge), (\argc, \arge), (\argd, \arga), (\argd, \argd), (\arge, \argb), (\arge, \argc), (\arge, \arge)\}), \{\arga, \argb, \argc\}, \\ \langle \sigma_{st}, \sigma_{pr}, \sigma_{gr} \rangle)$: our three agents apply stable, preferred, and grounded semantics, respectively, and are interested in the topic arguments $\arga$, $\argb$, and $\argc$.
\end{example}
We now introduce a measure of the degree of satisfaction given an argumentation semantics w.r.t. an argumentation framework, and two subsets of the argumentation framework's arguments (a topic set and a conclusion set, the latter of which is typically an extension inferred by applying an argumentation semantics).
First, we introduce the degree of satisfaction as an abstract measure.
\begin{definition}[Abstract Degree of Satisfaction]\label{adegree-sat}
Let $\sigma$ be an argumentation semantics, let $AF = (AR, AT)$ be an argumentation framework, and let $T, S \subseteq AR$. The degree of satisfaction of $\sigma$ w.r.t. $AF$, $T$, and $S$, denoted by $\phi^{sim}_{\sigma}(AF, T, S)$, is determined as follows:
\begin{align*}
  \phi_{\sigma}^{sim}(AF, T, S) =
  max(\{sim(E,S,T) | E \in \sigma(AF)\}),
\end{align*}
where $sim: 2^{\cal AR} \times 2^{\cal AR} \times 2^{\cal AR} \rightarrow [0, 1]$. We call $sim$ the \emph{similarity function} of $\phi_{\sigma}^{sim}$.
\end{definition}
We then define several specific measures of the degree of satisfaction, using a measure that is based on the well-known \emph{Hamming distance} measure~\cite{6772729} (which is also used in related research on \emph{merging argumentation frameworks}~\cite{10.5555/3032027.3032032}), as well as simpler intersection-based measures.
\begin{definition}[Degree of Satisfaction, Similarity Measures]\label{degree-sat}
Given the sets of arguments $T, E, S \in 2^{\cal AR}$, as well as the set differences (relative complements) $E' := T \setminus E$ and $S' := T \setminus S$, we define the following similarity functions of \emph{extensions} $E$ and $S$ with respect to the \emph{topic} $T$.
\begin{description}
    \item[Intersection-based similarity] ($i$-similarity):
    \begin{align*}
        i(E,S,T) =
        \begin{cases}
            1 & \text{if  } |T \cap (E \cup S)| = 0; \\
            \frac{|T \cap E \cap S|}{|T \cap (E \cup S)|} & \text{otherwise}.
        \end{cases}    
    \end{align*}
    \item[Complement-based similarity] ($c$-similarity):
    \begin{align*}
        c(E,S,T) =
        \begin{cases}
            1 & \text{if  } |T \cap (E' \cup S')| = 0; \\
            \frac{|T \cap E' \cap S'|}{|T \cap (E' \cup S')|} & \text{otherwise}.
        \end{cases}
    \end{align*}
    \item[Hamming-based similarity] ($h$-similarity):
    \begin{align*}
        h(E,S,T) =
        \begin{cases}
            1 & \text{if  } |T| = 0; \\
            \frac{|T \cap E \cap S| + |(E') \cap (S')|}{|T|} & \text{otherwise}.
        \end{cases}    
    \end{align*}
\end{description}
\end{definition}
Here, the intersection-based ($i$-similarity) and complement-based similarity ($c$-similarity) serve as mere building blocks for the Hamming similarity-like measure; we merely use both $i$-similarity and $h$-similarity in Example~\ref{example-aas-1} to highlight the nuanced, yet important, difference.
Let us note that the measures could potentially be extended to account for weights that model the importance of arguments; intuitively, the current measures consider all arguments in the topic as equally important, whereas all arguments that are not in the topic are considered as entirely irrelevant.

Now, we can introduce the two-agent degree of satisfaction for AAS, as a means to provide a starting point for the $n$-agent degrees that follow later.
\begin{definition}[Two-Agent Degree of Satisfaction]
\label{def:two-agent-degree}
Let $AF = (AR, AT)$ be an argumentation framework, let $T \subseteq AR$, let $\sigma$ and $\sigma'$ be argumentation semantics, and let $sim$ be a similarity function.
The degree of satisfaction between $\sigma$ and $\sigma'$ w.r.t. $AF$ and $T$, denoted by $sat^{sim}(AF, T, \sigma, \sigma')$, is determined as follows:
\begin{align*}
    sat^{sim}(AF, T, \sigma, \sigma') = max(\{\phi^{sim}_{\sigma}(AF, T, E)| E \in \sigma'(AF) \}).
\end{align*}
\end{definition}
Intuitively, the two-agent degree of satisfaction measures the distance between the maximally similar extensions (considering only the topic arguments of the extensions) that the agents' semantics can infer from the given argumentation framework.
Let us introduce an example for illustration purposes.
\begin{example}\label{example-aas-1}
Consider again the argumentation framework $AF = (\{\arga, \argb, \argc,  \argd, \arge\},  \{(\argb, \arge), (\argc, \arge), (\argd, \arga), (\argd, \argd), (\arge, \argb), (\arge, \argc), (\arge, \arge)\})$, as depicted in Figure~\ref{fig:example-1}, and let $T = \{\arga, \argb, \argc\}$. Let $AAS = (AF, T, \langle \sigma_{st}, \sigma_{pr}, \sigma_{gr}\rangle)$.
Recall that the extensions that the semantics infer are the following:
\begin{itemize}
    \item $\sigma_{st} = \{\{\arga, \argb, \argc\}\}$; 
    \item $\sigma_{pr} = \{\{\argb, \argc\}\}$; 
    \item $\sigma_{gr} = \{\{\}\}$.
\end{itemize}
Table~\ref{table:3} shows all two-agent degrees of satisfaction of the $AAS$, given the $h$-similarity and $i$-similarity functions (the degree of the latter in parentheses).
We can observe that the latter two function arguments (the two argumentation semantics) of the two-agent degree of satisfaction are commutative: for every $AF, AF' \in {\cal F}$ and every $\sigma, \sigma' \in \cal S$ it holds that $sat^{sim}(AF, T, \sigma, \sigma') = sat^{sim}(AF, T, \sigma', \sigma)$.
\begin{table*}
\caption{Matrix of degrees of satisfaction between the three agents in the argumentation-based agreement scenario from Example~\ref{example-aas-1}, given the h-similarity and the i-similarity functions (the degree of the latter in parentheses).}
\label{table:3}
\centering
\renewcommand{\arraystretch}{1.0}
\begin{tabular}{ |c|c|c|c|} 
 \hline
 \phantom{ } & $\sigma_{st}$ & $\sigma_{pr}$  & $\sigma_{gr}$\\ 
  \hline
 $\sigma_{st}$ & $1$ ($1$) & $\frac{2}{3}$ ($\frac{2}{3}$) & $0$ ($0$) \\ 
 $\sigma_{pr}$ & $\frac{2}{3}$ ($\frac{2}{3}$) & $1$ ($1$) & $\frac{1}{3}$ ($0$) \\
 $\sigma_{gr}$ & $0$ ($0$) & $\frac{1}{3}$ ($0$) & $1$ ($1$) \\ 
 \hline
\end{tabular}
\end{table*}
\end{example}
The situation is more complicated when we move from two to $n \in \mathbb{N}$, $n > 2$ agents: then, we cannot simply bi-laterally maximize similarity, but instead need to maximize an aggregated measure.
We define degrees of minimal, mean, and median agreement for argumentation-based agreement scenarios. Note that in the definition we make use of the median of a sequence (ordered multiset) of real numbers $K$, denoted by $med(K)$.
\begin{definition}[Degrees of Minimal, Mean, and Median Agreement]\label{degs-aas}
Let $AAS = (AF, T, SIG)$ be an argumentation-based agreement scenario, with $SIG = \langle \sigma_0, ..., \sigma_n \rangle$, and let $sim$ be a similarity function (cf. Definition~\ref{adegree-sat}).
The degree of minimal agreement of $AAS$, denoted by $deg^{sim}_{min}(AAS)$, is determined as follows:
    \begin{align*}
        deg^{sim}_{min}(AAS) =  max(\{ min(\{\phi^{sim}_{\sigma_i}(AF, T, E) | 0 \leq i \leq n \}) | E \in 2^{T} \}). 
    \end{align*}
The degree of mean agreement of $AAS$, denoted by $deg_{mean}(AAS)$, is determined as follows:
    \begin{align*}
        deg^{sim}_{mean}(AAS) = max(\{ \phi'_{avg}(E)| E \in 2^{T} \}), 
    \end{align*}
where $\phi'_{avg}(E) = \frac{\sum_{i = 0}^{n} \phi^{sim}_{\sigma_i}(AF, T, E)}{n+1}$.

The degree of median agreement of $AAS$, denoted by $deg_{med}(AAS)$, is determined as follows:
    \begin{align*}
        deg^{sim}_{med}(AAS) = max(med(\langle
        s_0(E), ..., s_n(E)
        \rangle) | E \in 2^{T}),
    \end{align*}
where for each $s_j(E), 0 \leq j \leq n$, $s_j(E) = \phi^{sim}_{\sigma_j}(AF, T, E)$ and for each $s_k(E), 0 < k \leq n$, $s_k(E) \geq s_{k-1}(E)$.
\end{definition}
Let us claim that intuitively, the degree of minimal agreement is the most sensitive to outliers: if most (of many) agents agree on what arguments to infer from the topic but one agent disagrees entirely, the  degree of minimal agreement is affected substantially.
In contrast, the degree of median agreement is the least sensitive, i.e., it may not be affected at all in such cases.
\begin{example}
\label{ex:degrees-rev}
Considering the AAS in Example~\ref{example-aas-1} and the $h$-similarity (as well as the $i$-similarity) function, we have the following degrees of agreement:
\begin{itemize}
    \item degree of minimal agreement: $\frac{1}{3}$;
    \item degrees of mean and median agreement: $\frac{2}{3}$.
\end{itemize}
\end{example}
%

\section{Expanding Argumentation-based Agreement Scenarios}
\label{exp-aas}
In our study, we are interested in how degrees of agreement change in dynamic environments, in which agents have a dialogue by exchanging arguments.
In our case, the dynamism is modeled by a normally expanding argumentation framework, i.e., new arguments are added to an initial argumentation framework, as well as attacks involving these new arguments as source or targets, whereas the attack relationships between initially existing arguments remain unchanged.
In order to model how agreement scenarios \emph{change} in the face of new information that becomes available, let us define normal expansions in the context of AAS.
\begin{definition}[AAS Normal Expansions]
Let $AAS = (AF, T, SIG)$ and $AAS' = (AF', T', SIG')$ be argumentation-based agreement scenarios, with $AF = (AR, AT)$.
$AAS'$ is a \emph{normal expansion} of $AAS$, denoted by $AAS \preceq_N AAS'$, iff it holds true that $AF \preceq_N AF'$, $T \subseteq T', (T' \setminus T) \cap AR = \emptyset$ and $SIG = SIG'$.
\end{definition}
The idea behind an AAS normal expansion is that agents engage in an argumentation-based \emph{dialogue} that starts with a topic set (a subset of the arguments of the AAS' argumentation framework). The agents argue by adding new arguments to the argumentation framework, and may also expand the topic set to include some of these new arguments; in contrast, the topic cannot be expanded using arguments from the initial argumentation framework. Also, the agents cannot ``remove'' arguments (neither from the argumentation framework nor from the topic set), nor ``change'' the attack relations between existing arguments or add existing arguments to the topic set.

Let us illustrate the notion of an AAS normal expansion by introducing an example.
\begin{example}\label{ex:aas-expansion}
Let us revisit the argumentation frameworks depicted in Figure~\ref{fig:example-exp}:
\begin{itemize}
    \item $AF_0 = (\{\arga, \argb, \argc\}, \{(\arga, \argb), (\argb, \argc)\})$;
    \item $AF_1 = (\{\arga, \argb, \argc,  \argd\}, \{(\arga, \argb), (\argb, \arga), (\argb, \argc), (\argd, \arga)\})$;
    \item $AF_2 = (\{\arga, \argb, \argc,  \argd\}, \{(\arga, \argb), (\argb, \argc), (\argd, \arga)\})$.
\end{itemize}
Here, $AF_0$ is the argumentation framework in our initial agreement scenario, whereas $AF_1$ and $AF_2$ are then used in potential expansions.
We utilize these argumentation frameworks in the following AAS:
\begin{itemize}
    \item $AAS_0 = (AF_0, \{\arga, \argb\}, \langle \sigma_0, \sigma_1 \rangle)$;
    \item $AAS_1 = (AF_1, \{\arga, \argb\}, \langle \sigma_0, \sigma_1 \rangle)$;
    \item $AAS_2 = (AF_2, \{\arga, \argb, \argd\}, \langle \sigma_0, \sigma_1 \rangle)$;
    \item $AAS_2' = (AF_2, \{\arga, \argb, \argc\}, \langle \sigma_0, \sigma_1 \rangle)$;
    \item $AAS_2'' = (AF_2, \{\arga, \argb\}, \langle \sigma_0, \sigma_1, \sigma_2 \rangle)$.
\end{itemize}
Here, we do not care about the specifics of the argumentation semantics $\sigma_0$, $\sigma_1$, and $\sigma_2$.
Now, we can observe the following:
\begin{itemize}
    \item $AAS_0 \preceq_N AAS_1$ does not hold: the expansion from $AF_0$ to $AF_1$ ``adds'' an attack between arguments in the initial argumentation framework;
    \item $AAS_0 \preceq_N AAS_2$ holds: $AF_0 \preceq_N AF_1$ holds, the topic is expanded by adding a ``new'' argument, and the sequence of semantics remains the same; 
    \item $AAS_0 \preceq_N AAS_2'$ does not hold: the topic is expanded by adding an argument from the initial argumentation framework;
    \item $AAS_0 \preceq_N AAS_2''$ does not hold: the sequence of semantics is changed by appending a third semantics (representing a third agent).
\end{itemize}
\end{example}
Let us show that formal argumentation principles can help guarantee, given an argumentation-based agreement scenario $AAS$ and any of its normal expansions $AAS'$, that there is an upper bound to the difference between the degrees of minimal 
agreements of $AAS$ and $AAS'$, given some constraints; this observation helps us understand that in many (dynamic) scenarios, agreements remain somewhat stable unless there is a substantial compelling event that leads to the violation of the aforementioned constraints.

We first introduce a class of argumentation principles, which we call \emph{relaxed monotony principles}.
\begin{definition}[Relaxed Monotony Principle]\label{relaxed-monotony}
An argumentation semantics $\sigma$ satisfies the relaxed monotony principle $RMP_{p}$ iff for every two argumentation frameworks $AF = (AR, AT)$, $AF' = (AR', AT')$ such that $AF \preceq_N AF'$, the following statement holds true:
\begin{align*}
    &{} \forall E \in \sigma(AF), \\
    &{} {if } \: p(AF, AF', E, \sigma) \text{ holds true} \\ &{} 
    \text{ then } \exists E' \in \sigma(AF') \text{ s.t.\ } E \subseteq E',
\end{align*}
where $p$ (the principle's $p$-function) is a Boolean function $p: {\cal F} \times {\cal F} \times 2^{\cal AR} \times {\cal S} \rightarrow \{true, false\}$, i.e., $AF, AF' \in {\cal F}$, $E \in 2^{\cal AR}$ (also: $E \subseteq AR$), and $\sigma \in \cal S$.
\end{definition}
We may call the $p$-function of a relaxed monotony principle the principle's \emph{monotony condition}; given an argumentation semantics $\sigma$ that satisfies a relaxed monotony principle $RMP_{p}$, we may say that $\sigma$ satisfies $p$-relaxed monotony.
Now, we can characterize weak cautions monotony (Definition~\ref{def:weak-cm}) as a relaxed monotony principle.
\begin{restatable}{reprop}{wcmprop}\label{prop-wcm}
An argumentation semantics $\sigma$ satisfies weak cautious monotony iff $\sigma$ satisfies the relaxed monotony principle $RMP_{cm}$, where the $p$-function is characterized by the following function:
    \begin{align*}
        &{} cm(AF, AF', E, \sigma) = \begin{cases}
      true, \text{if  } S_{*} = \emptyset;  &\\
      false,\text{otherwise},
    \end{cases}
    \end{align*}
    $AF = (AR, AT), AF' = (AR', AT')$, and $S_{*} = \{(\arga, \argb) | (\arga, \argb) \in AT', a \in AR' \setminus AR, b \in E \}$.
\end{restatable}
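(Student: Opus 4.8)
The plan is to establish the biconditional by a direct definitional unfolding: I would show that the defining condition of $RMP_{cm}$ and the defining condition of weak cautious monotony are literally the same universally quantified statement, so that both directions of the ``iff'' follow at once. Both conditions quantify over the same domain---every pair $AF \preceq_N AF'$ and every $E \in \sigma(AF)$---and carry the same consequent, $\exists E' \in \sigma(AF')$ with $E \subseteq E'$; hence the whole argument reduces to matching their antecedents.

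First I would unfold $RMP_{cm}$ via Definition~\ref{relaxed-monotony}, instantiating the abstract $p$-function with the concrete $cm$: $\sigma$ satisfies $RMP_{cm}$ iff, for every $AF \preceq_N AF'$ and every $E \in \sigma(AF)$, the truth of $cm(AF, AF', E, \sigma)$ implies the existence of some $E' \in \sigma(AF')$ with $E \subseteq E'$. I would then substitute the definition of $cm$, which returns $true$ precisely when $S_* = \emptyset$, where $S_* = \{(\arga, \argb) \mid (\arga, \argb) \in AT', \arga \in AR' \setminus AR, \argb \in E\}$.

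The key---and only substantive---step is to observe that this set $S_*$ coincides with the set occurring in the antecedent of weak cautious monotony (Definition~\ref{def:weak-cm}): both collect exactly those attacks of $AF'$ whose source is a newly introduced argument (lying in $AR' \setminus AR$) and whose target belongs to $E$, so the naming of the bound pair is immaterial. Consequently ``$cm(AF, AF', E, \sigma)$ holds true'' is equivalent to ``$S_* = \emptyset$'', which is exactly the hypothesis of weak cautious monotony. Since the two conditionals then share quantifier prefix, antecedent, and consequent, they are the same statement, and the biconditional is immediate.

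I do not anticipate a genuine obstacle: the proposition is a reformulation rather than a deep result, and the proof needs neither a case distinction, nor a witnessing construction, nor any property of a specific semantics---it holds for arbitrary $\sigma$. The one point meriting care is the symbol-for-symbol verification that $S_*$ and the weak cautious monotony antecedent pick out identical sets of attacks (same attack relation $AT'$, same ``new attacker'' condition $\arga \in AR' \setminus AR$, same membership $\argb \in E$); once this is confirmed the equivalence follows purely formally.
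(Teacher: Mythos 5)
Your proposal is correct and follows essentially the same route as the paper's proof: both unfold the definitions of weak cautious monotony and $RMP_{cm}$, observe that the set $S_*$ in the $cm$-function is exactly the set appearing in the antecedent of Definition~\ref{def:weak-cm}, and conclude that the two universally quantified conditionals are the same statement (the paper phrases the final step contrapositively, as ``violated iff violated,'' but this is an immaterial difference).
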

All proofs are provided in the appendix.

For the sake of increasing the stability of the agreements that agents reach, we may want to introduce stricter relaxed monotony principles to then enforce that agents adhere to them in the context of our agreement scenarios.
Let us, for this purpose, introduce a principle that (roughly speaking) stipulates that we can only relax monotony if the expansion of our initial argumentation framework implies that an argument that attacks our initially inferred extension cannot be rejected.
For this, let us first introduce the notion of a \emph{strong attacker} in the context of argumentation framework expansions.
\begin{definition}[Strong Attacker]
Let $AF = (AR, AT)$ and $AF' = (AR', AT')$ be argumentation frameworks, such that $AF \preceq_E AF'$ and let $S \subseteq AR$. $\arga \in AR'$ is a strong attacker of $S$ w.r.t. $AF, AF'$, denoted by $\arga \rightarrow_{AF, AF'} S$, iff $\arga$ attacks $S$ and $\exists S' \subseteq AR'$, such that $S'$ is conflict-free in $AF'$ and $S'$ strongly defends $\arga$.
\end{definition}
The principle's definition follows below.
\begin{definition}[Strong Relaxed Monotony]\label{causal-ref-dep} An argumentation semantics $\sigma$ satisfies strong relaxed monotony iff $\sigma$ satisfies the relaxed monotony principle $RMP_{srm}$ s.t. the $p$-function is characterized as follows:
    \begin{align*}
       &{} srm(AF, AF', E, \sigma) = \begin{cases}
      true & \text{if  } \nexists E' \in \sigma(AF') \text{ s.t.\ } \exists \arga \in E', \arga \rightarrow_{AF, AF'} E; \\
      false & \text{otherwise}.
    \end{cases}
    \end{align*}
\end{definition}
Let us note that strong relaxed monotony is not necessarily stricter than weak cautious monotony: intuitively, the new strong attacker of an extension whose presence implies that we may reject arguments that we have previously inferred does not necessarily stem from a new direct attack on the initially inferred extension.
For example, we may infer the stage extension $\{\arga\}$ from the argumentation framework $AF = (\{\arga, \argb, \argc\}, \{(\arga, \argb) , (\argb, \argc), (\argc, \arga)\})$ but when expanding $AF$ to $AF' = (\{\arga, \argb, \argc, \argd\},  \{(\arga, \argb) , (\argb, \argc), (\argc, \arga)\}, \{\argd, \argb\})$, the only stage extension is $\{\argc, \argd\}$. The $p$-function of weak cautious monotony evaluates to true, because no argument that attacks $\arga$ has been added with the normal expansion; in contrast, the $p$-function of strong relaxed monotony evaluates to false because $\argc$ is in the stage extension of $AF'$ and a strong attacker of $\arga$.

The only argumentation semantics (of the ones whose definitions we provide) that satisfies strong relaxed monotony is naive semantics, which indeed satisfies every relaxed monotony principle.
\begin{restatable}{reprop}{rmpprop}\label{prop-rmp}
For every $AF = (AR, AT), AF' = (AR', AT')$, such that $AF \preceq_N AF'$, the following statement holds true for every relaxed monotony principle $RMP_p$:
\begin{align*}
    &{} \forall E \in \sigma_{na}(AF), \\
    &{} {if } \: p(AF, AF', E, \sigma_{na}) \text{ holds true} \\ &{} 
    \text{ then } \exists E' \in \sigma_{na}(AF') \text{ s.t.\ } E \subseteq E'.
\end{align*}
\end{restatable}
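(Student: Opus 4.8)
The plan is to observe that the claim is, at its core, a statement about unconditional monotony, from which the form for every $RMP_p$ follows at once. Since the $p$-function occurs only as the antecedent of an implication, it suffices to establish the consequent outright: if one shows that every naive extension of $AF$ is contained in some naive extension of $AF'$ \emph{without any assumption on $p$}, then the implication ``if $p$ holds then $\exists E' \ldots$'' is true regardless of the truth value of $p(AF, AF', E, \sigma_{na})$, and hence holds for every relaxed monotony principle simultaneously. So I would first reduce the problem to proving the following: for every normal expansion $AF \preceq_N AF'$ and every $E \in \sigma_{na}(AF)$, there exists $E' \in \sigma_{na}(AF')$ with $E \subseteq E'$.

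The key technical step is to show that a naive extension $E$ of $AF$ remains conflict-free in $AF'$. Being a naive extension, $E$ is in particular conflict-free in $AF$, and $E \subseteq AR \subseteq AR'$. The crucial use of the normal-expansion hypothesis is that no attack between two arguments already belonging to $AR$ is introduced: by the definition of $\preceq_N$ there is no $(\arga, \argb) \in AT' \setminus AT$ with $\arga, \argb \in AR$, and since $AT \subseteq AR \times AR$ and $AT \subseteq AT'$, this yields $AT' \cap (AR \times AR) = AT$. Because $E \subseteq AR$, any attack internal to $E$ in $AF'$ would lie in $AT' \cap (AR \times AR) = AT$, i.e., it would already be an attack of $AF$; as $E$ is conflict-free in $AF$, no such attack exists, so $E$ is conflict-free in $AF'$ as well.

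Finally, I would extend $E$ to a naive extension of $AF'$. Since $AF'$ has finitely many arguments, the family of conflict-free supersets of $E$ in $AF'$ is finite and nonempty (it contains $E$ itself), so it has a $\subseteq$-maximal element $E'$. By construction $E'$ is a maximal conflict-free set of $AF'$, hence $E' \in \sigma_{na}(AF')$, and $E \subseteq E'$, which is exactly what is required.

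The step I expect to be the conceptual crux is the first one, namely recognizing that the arbitrariness of the $p$-function collapses the statement to unconditional monotony; the only genuinely argumentation-theoretic work is the preservation of conflict-freeness, which hinges entirely on the fact that normal expansions add no attacks among previously existing arguments. Notably, no subtlety arises from maximality: the preservation argument needs $E$ merely to be conflict-free (which naivety guarantees), and the passage to a maximal conflict-free superset is immediate from the finiteness of $AR'$.
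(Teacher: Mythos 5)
Your proposal is correct and follows essentially the same route as the paper's own proof: both arguments reduce the claim to the unconditional statement that every naive extension of $AF$ remains conflict-free in the normal expansion $AF'$ (because no attacks between pre-existing arguments are added) and is therefore contained in some $\subseteq$-maximal conflict-free set of $AF'$, i.e., a naive extension. Your write-up merely makes explicit two points the paper leaves implicit, namely that the arbitrariness of the $p$-function renders the antecedent irrelevant and that finiteness guarantees the existence of a maximal conflict-free superset.
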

By example, we can show that complete, preferred, grounded, and stage semantics do not satisfy strong relaxed monotony.
\begin{example}\label{ex:srm}
Consider the argumentation framework $AF = (AR, AT) = (\{\arga\}, \{\})$, $AF' = (AR', AT') = (\{\arga, \argb, \argc\}$, $\{(\argb, \arga), (\argb, \argc), (c, a), (\argc, \argb)\})$ (Figure~\ref{fig:example-srm}). Note that $AF \preceq_N AF'$.
We observe the following, in absence of a new strong attacker of $\{\arga\}$ in the (normal) expansion of $AF$ to $AF'$:
\begin{itemize}
    \item $\sigma_{co}(AF) = \{\{\arga\}\}$ and $\sigma_{co}(AF') =  \{\{\argb\}, \{\argc\}, \{\}\}$;
    \item $\sigma_{pr}(AF) = \{\{\arga\}\}$ and $\sigma_{pr}(AF') = \{\{\argb\}, \{\argc\}\}$;
    \item $\sigma_{gr}(AF) = \{\{\arga\}\}$ and $\sigma_{gr}(AF') = \{\}$;
    \item $\sigma_{st}(AF) = \{\{\arga\}\}$ and $\sigma_{st}(AF') = \{\{\argb\}, \{\argc\}\}$.
\end{itemize}
\begin{figure}[!ht]

\subfloat[$AF
    $.\label{subfig:af}]{
        \begin{tikzpicture}[scale=0.7,
            noanode/.style={scale=0.7,dashed, circle, draw=black!60, minimum size=10mm, font=\bfseries},
            unanode/.style={scale=0.7,circle, draw=black!75, minimum size=10mm, font=\bfseries},
            invnode/.style={scale=0.7,circle, draw=white!0, minimum size=0mm, font=\bfseries},
            anode/.style={scale=0.7,,circle, fill=lightgray, draw=black!60, minimum size=10mm, font=\bfseries},
            ]
            \node[anode]    (a)    at(2,2)  {\arga};
        \end{tikzpicture}
    }
    \hspace{10pt}
    \centering
    \subfloat[$AF'
    $.\label{subfig:afprime}]{
        \begin{tikzpicture}[scale=0.7,
            noanode/.style={scale=0.7,dashed, circle, draw=black!60, minimum size=10mm, font=\bfseries},
            unanode/.style={scale=0.7,circle, draw=black!75, minimum size=10mm, font=\bfseries},
            invnode/.style={scale=0.7,circle, draw=white!0, minimum size=0mm, font=\bfseries},
            anode/.style={scale=0.7,,circle, fill=lightgray, draw=black!60, minimum size=10mm, font=\bfseries},
            ]
            \node[noanode]    (a)    at(0,4)  {\arga};
            \node[unanode]    (b)    at(2,4)  {\argb};
            \node[unanode]    (c)    at(2,2)  {\argc};
            \path [->, line width=0.5mm]  (b) edge node[left] {} (a);
            \path [->, line width=0.5mm]  (b) edge node[left] {} (c);
            \path [->, line width=0.5mm]  (c) edge node[left] {} (a);
            \path [->, line width=0.5mm]  (c) edge node[left] {} (b);
        \end{tikzpicture}
    }
\caption{Example~\ref{ex:srm}: violation of strong relaxed monotony, here assuming we apply complete, preferred, or stage semantics.}
\label{fig:example-srm}
\end{figure}
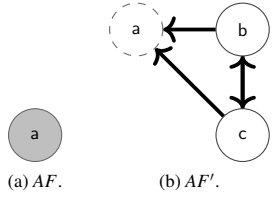
\end{example}

Let us claim that the same example shows that the strong relaxed monotony principle is not satisfied by any semantics surveyed in~\cite{baroni2018abstract} (with the exception of naive semantics), and neither by the weakly admissible set-based semantics that have been introduced in~\cite{baumann2020revisiting}.
However, even when semantics that do not satisfy this principle (or any other relaxed monotony principle) are used, the agents can still commit to complying with the principle.
We illustrate this by example.
\begin{example}
\label{ex:relaxed-monotony}
Let us again consider the introductory example. Recall that we use the $h$-similarity function (Definition~\ref{degree-sat}) as our similarity measure.
However, now we go a step back into the past, and assume that the agents started debating the following argumentation framework first: $AF = (AR, AT) = (\{\arga, \argb, \argc\}, \{\})$.
Considering the introductory example, this gives us the following argumentation-based agreement scenario: $AAS = (AF, \{\arga, \argb, \argc\}, \langle \sigma_{st}, \sigma_{pr}, \sigma_{gr} \rangle)$. Because $\sigma_{pr}(AF) = \sigma_{st}(AF) = \sigma_{gr}(AF) = \{\{\arga, \argb, \argc\}\}$, the degrees of minimal, mean, and median agreement are all $1$.
We expand $AAS$ and create a new argumentation-based agreement scenario $AAS' = (AF', \{\arga, \argb, \argc\}, \langle \sigma_{st}, \sigma_{pr}, \sigma_{gr} \rangle))$, where $AF' = (AR', AT') = (\{\arga, \argb, \argc,  \argd, \arge\},  \{(\argb, \arge), (\argc, \arge), (\argd, \arga), (\argd, \argd), (\arge, \argb), (\arge, \argc), (\arge, \arge)\})$ (see Figure~\ref{fig:example-1}).
Without any further considerations, this would have severe effects on the degrees of agreement, given the $h$-similarity function:
\begin{itemize}
    \item $1 - |deg^{h}_{min}(AAS) - deg^{h}_{min}(AAS')| = \frac{1}{3}$;
    \item $1 - |deg^{h}_{mean}(AAS) - deg^{h}_{mean}(AAS')| = \frac{2}{3}$;
    \item $1 - |deg^{h}_{med}(AAS) - deg^{h}_{med}(AAS')| = \frac{2}{3}$.
\end{itemize}
To prevent this, let us assume all agents have committed to comply with the strong relaxed monotony principle $RMP_{srm}$, i.e., they adjust their inferences for the sake of keeping the degree of agreement stable. This means that when expanding $AAS$ to $AAS'$, each agent checks if $srm'(AF, AF', ES, \sigma)$ holds true, given the agent's argumentation semantics $\sigma$ and $ES = \sigma(AF')$:
\begin{align*}
srm'(AF, AF', ES, \sigma) = 
\begin{cases}
      true & \text{if  } \forall E \in \sigma(AF), \\
       & \quad \text{ if } srm(AF, AF', E, \sigma) \text{ holds true} \\
        & \quad \text{then } \exists E' \in ES \text{ s.t.\ } E \subseteq E'; \\
      false & \text{otherwise}.
\end{cases}
\end{align*}
Note that in contrast to $srm$, $srm'$ is \emph{not} the p-function of a relaxed monotony principle.
If this is not the case, the agent adjusts its argumentation semantics $\sigma$ to $\sigma^{*}$, such that for $ES^{*} = \sigma^{*}(AF')$, the following holds true:
\begin{itemize}
    \item $srm'(AF, AF', ES^{*}, \sigma)$ holds true;
    \item There exists no argumentation semantics $\sigma^{**}$, such that $srm'(AF, AF', \sigma^{**}(AF'), \sigma)$ holds true and $\exists E^{**} \in \sigma^{**}(AF')$, such that $(E^{**} \not \in \sigma^{*}(AF')$, $\phi_{\sigma}(AF', T', E^{**}) \geq sat^h(AF', T', \sigma, \sigma^{*}))$ and $\nexists E^{*} \in \sigma^{*}(AF')$, \emph{such that} $\phi_{\sigma}(AF', T', E^{*}) \leq sat^h(AF', T', \sigma, \sigma^{*})$.
\end{itemize}
Recall that $sat^h$ denotes the two-agent degree of satisfaction (Definition~\ref{def:two-agent-degree}), given $h$ as our similarity measure. Roughly speaking, $\sigma^{*}$ satisfies the relaxed monotony condition in the specific case of $AAS$  and $AAS'$, and all extensions in $\sigma^{*}(AF')$ provide maximal satisfaction (considering that the relaxed monotony constraint must be satisfied), given the topic set $T'$ and the agent's ``original'' semantics.
In our example, the agents can adjust $\sigma_{pr}$ and $\sigma_{gr}$ to $\sigma_{pr}^{*}$ and $\sigma_{gr}^{*}$, such that $\sigma_{st}(AF') = \sigma_{pr}^{*}(AF') = \sigma_{gr}^{*}(AF') = \{\{a, b, c,\}\}$
 and hence:
 \begin{align*}
    &{} |deg^{h}_{min}(AAS) - deg^{h}_{min}(AAS')| = \\
    &{} |deg^{h}_{mean}(AAS) - deg^{h}_{mean}(AAS')| = \\ 
    &{} |deg^{h}_{med}(AAS) - deg^{h}_{med}(AAS')| = 0.
\end{align*}
\end{example}
In the example above, we are interested in how the degree of agreement changes as the result of an agreement scenario expansion.
To facilitate conciseness, let us introduce an abstraction for such change.
\begin{definition}[Agreement Delta]\label{def:aas-delta}
   Let $AAS$ and $AAS'$ be argumentation-based agreement scenarios and let $sim$ be a similarity function. We define the agreement delta of $AAS$ and $AAS'$ w.r.t. $sim$ and a degree of agreement $deg^{sim} \in \{deg^{sim}_{min}, deg^{sim}_{mean}, deg^{sim}_{med}\}$, denoted by $\Delta_{deg^{sim}}(AAS, AAS')$, as $|deg^{sim}(AAS) - deg^{sim}(AAS')|$.
\end{definition}
Let us revisit a previous example to illustrate the notion of an agreement delta.
\begin{example}
\label{ex:agreement-delta}
Consider again the following two AAS  (similar to Example~\ref{ex:relaxed-monotony}):
\begin{itemize}
    \item $AAS = (AF, \{\arga, \argb, \argc\}, \langle \sigma_{st}, \sigma_{pr}, \sigma_{gr} \rangle)$, where $AF = \{\arga, \argb, \argc\}, \{\})$;
    \item $AAS' = (AF', \{\arga, \argb, \argc\}, \langle \sigma_{st}, \sigma_{pr}, \sigma_{gr} \rangle))$, where $AF' = (AR', AT') = $ \\ $ (\{\arga, \argb, \argc,  \argd, \arge\},  \{(\argb, \arge), (\argc, \arge), (\argd, \arga), (\argd, \argd), (\arge, \argb), (\arge, \argc), (\arge, \arge)\})$.
\end{itemize}
Clearly, we have $deg^{h}_{min}(AAS) = deg^{h}_{mean}(AAS) = deg^{h}_{med}(AAS) = 1$ (because $\sigma_{st}(AF) = \sigma_{pr}(AF) = \sigma_{gr}(AF) = \{\arga, \argb, \argc\}$).
From a previous example (Example~\ref{ex:degrees-rev}), we know that $deg^{h}_{min}(AAS') = \frac{1}{3}$ and $deg^{h}_{mean}(AAS') = deg^{h}_{med}(AAS') = \frac{2}{3}$. Hence, $\Delta_{deg^{h}_{min}} = |1 - \frac{1}{3}| = \frac{2}{3}$ and $\Delta_{deg^{h}_{mean}} = \Delta_{deg^{h}_{med}} = |1 - \frac{2}{3}| = \frac{1}{3}$.
\end{example}
Every relaxed monotony principle $RMP_{p
}$ guarantees the following, given a non-violated monotony condition, two argumentation-based agreement scenarios $AAS = (AF, T, SIG)$ and $AAS' = (AF', T', SIG')$ such that $AAS \preceq_N AAS'$, and the $h$-similarity function:
\begin{itemize}
    \item If the topic does not change and all agents agree that all topic arguments should be inferred, then we can guarantee that our degrees of minimal, mean, and median agreement remain the same, i.e. their agreement delta is $0$;
    \item In turn, the guarantee we can provide for the stability of the degree of minimal agreement depends on the relative number of topic arguments that all agents agree on inferring, as well as on the stability of the topic (i.e., on whether and how many topic arguments we add to our agreement scenario when expanding).
\end{itemize}
Let us formalize the first observation.
\begin{restatable}{reprop}{propaasrelaxedonea}\label{proposition-3a}
Let $AAS =$ $(AF, T, SIG)$ and $AAS' = (AF', T', SIG')$ be argumentation-based agreement scenarios s.t.\ $AAS \preceq_N AAS'$, let $T = T'$, and let $h$ be the $h$-similarity function. Let $RMP_{p}$ be a relaxed monotony principle and let it hold true that, given $SIG = \langle \sigma_0, ..., \sigma_n \rangle$, each $\sigma_i, 0 \leq i \leq n$ satisfies $RMP_{p}$.
If for each  $\sigma_i$, $\exists E \in \sigma_i(AF)$ s.t. $T \subseteq E$ and $p(AF, AF', E, \sigma_i)$ holds true then $\deg^{h}_{min}(AAS) = \deg^{h}_{min}(AAS') = \deg^{h}_{mean}(AAS) = \deg^{h}_{mean}(AAS') = \deg^{h}_{med}(AAS) = \deg^{h}_{med}(AAS') = 1$.
\end{restatable}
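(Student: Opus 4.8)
The plan is to show that both scenarios attain the maximal degree $1$ on every aggregation by exhibiting a single witness candidate set, namely the topic $T$ itself, that yields satisfaction $1$ for every agent in both $AAS$ and $AAS'$. The computational heart is the elementary observation that for any extension $E$ with $T \subseteq E$ we have $h(E, T, T) = 1$: taking the second argument equal to $T$ gives $T \setminus T = \emptyset$, so the term $|E' \cap (T\setminus T)|$ vanishes, while $T \cap E \cap T = T$ because $T \subseteq E$; hence the numerator equals $|T|$ and $h(E,T,T) = 1$ (the $|T| = 0$ case is immediate from the definition). Since every $h$-value, and therefore every $\phi^{h}_{\sigma_i}$, lies in $[0,1]$, the minimum, mean, and median over the agents are each bounded above by $1$, and so is the outer maximum over candidate sets in each of $deg^{h}_{min}$, $deg^{h}_{mean}$, $deg^{h}_{med}$. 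Thus it suffices to exhibit one candidate set realizing satisfaction $1$ for every agent simultaneously.

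First I would treat $AAS$. By hypothesis, for each $\sigma_i$ there is an extension $E_i \in \sigma_i(AF)$ with $T \subseteq E_i$. Using $T \in 2^{T}$ as the candidate set in the degree of satisfaction and taking the witness $E_i$, the observation above gives $\phi^{h}_{\sigma_i}(AF, T, T) = 1$ for every $i$. Consequently the minimum over agents is $1$, the mean is $\frac{(n+1)\cdot 1}{n+1} = 1$, and the median of a constant sequence of $1$'s is $1$; combined with the upper bound $1$ noted above, this yields $deg^{h}_{min}(AAS) = deg^{h}_{mean}(AAS) = deg^{h}_{med}(AAS) = 1$.

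Next I would transport the witnesses to $AF'$ via the relaxed monotony principle. Since $AAS \preceq_N AAS'$ we have $AF \preceq_N AF'$, $T' = T$, and $SIG = SIG'$, so the same semantics act on $AF'$ over the unchanged candidate space $2^{T}$. The hypothesis supplies, for each $i$, the \emph{same} witness $E_i \in \sigma_i(AF)$ satisfying both $T \subseteq E_i$ and $p(AF, AF', E_i, \sigma_i)$. Because $\sigma_i$ satisfies $RMP_{p}$, Definition~\ref{relaxed-monotony} yields an extension $E_i' \in \sigma_i(AF')$ with $E_i \subseteq E_i'$, whence $T \subseteq E_i'$. Repeating the previous paragraph's computation with $AF'$ in place of $AF$ and $E_i'$ as witness gives $\phi^{h}_{\sigma_i}(AF', T, T) = 1$ for all $i$, so $deg^{h}_{min}(AAS') = deg^{h}_{mean}(AAS') = deg^{h}_{med}(AAS') = 1$. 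Chaining the two sets of equalities establishes the claim.

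The one point requiring care is that the relaxed monotony step must be applied to the \emph{single} extension for which both $T \subseteq E_i$ and the monotony condition $p(AF, AF', E_i, \sigma_i)$ hold; this is precisely what the hypothesis guarantees, so the invocation of $RMP_{p}$ is legitimate and no separate argument identifying which extension satisfies $p$ is needed. The remaining verifications (that the $h$-similarity attains $1$, that $[0,1]$-valued aggregates are bounded above by $1$, and that $T' = T$ keeps the candidate space $2^{T}$ fixed across the two scenarios) are routine. I do not anticipate a substantive obstacle: the proposition reduces to bookkeeping that combines the $h$-similarity formula with the defining implication of a relaxed monotony principle.
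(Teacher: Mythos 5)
Your proof is correct and follows essentially the same route as the paper's: exhibit the topic $T$ as a candidate set on which every agent attains satisfaction $1$ in $AAS$, then use the defining implication of $RMP_{p}$ to carry each witness extension $E_i \supseteq T$ to some $E_i' \supseteq E_i \supseteq T$ in $\sigma_i(AF')$, giving degree $1$ in $AAS'$ as well. If anything, your version is slightly more careful than the paper's, since you apply the monotony condition only to the single witness extension that the hypothesis actually supplies and you verify the computation $h(E,T,T)=1$ explicitly rather than appealing to the definitions ``roughly speaking.''
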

Now, as a prerequisite for giving further bounds of change for the degree of \emph{minimal agreement}, let us show that its (tight) lower bound is, given an argumentation-based agreement scenario $AAS = (AF, T, SIG)$, $\frac{\lfloor |T| / 2 \rfloor}{|T|}$ (we denote the floor of a real number $r$ by $\lfloor r \rfloor$).
Note that because we divide by $|T|$ and later by $|T'|$, given a topic $T'$, we assume that $T, T' \neq \emptyset$ to avoid division by zero.
\begin{restatable}{relemma}{minmindeg}\label{lemma:min-min-deg}
Let $AAS =$ $(AF, T, SIG)$ be an argumentation-based agreement scenario and let $h$ be the $h$-similarity function.
It holds that the tight lower bound of $\deg^{h}_{min}(AAS)$ is $\frac{\lfloor |T| / 2 \rfloor}{|T|}$.
\end{restatable}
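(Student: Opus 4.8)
The plan is to establish the statement in two parts: first that $deg^{h}_{min}(AAS) \ge \frac{\lfloor |T|/2\rfloor}{|T|}$ holds for \emph{every} argumentation-based agreement scenario (the lower bound), and second that some scenario attains this value exactly (tightness). Throughout I would write $m := |T|$ and, for an agent's semantics $\sigma_i$ and an extension $E_i \in \sigma_i(AF)$, abbreviate its \emph{topic footprint} by $F_i := E_i \cap T$. The basic tool is that, because $E \subseteq T$, the value $h(E_i, E, T)$ depends on $E_i$ only through $F_i$ and simply counts, normalised by $m$, the topic arguments on which $F_i$ and $E$ \emph{agree} (are both contained or both excluded).

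For the lower bound I would first record the complementarity identity obtained by partitioning $T$ into the four cells determined by membership in $F_i$ and in $E$, namely $F_i\cap E$, $F_i\setminus E$, $E\setminus F_i$, and $T\setminus(F_i\cup E)$, whose sizes sum to $m$:
\[
h(E_i, E, T) + h(E_i, T\setminus E, T) = 1 .
\]
Hence for every agent and every one of its extensions at least one of the two targets $E$ and $T\setminus E$ yields similarity $\ge \tfrac12 \ge \frac{\lfloor m/2\rfloor}{m}$. Reformulating, $\phi^{h}_{\sigma_i}(AF, T, E) \ge \frac{\lfloor m/2\rfloor}{m}$ is equivalent to the existence of some footprint $F_i$ of agent $i$ at Hamming distance at most $\lceil m/2\rceil$ from $E$, once subsets of $T$ are viewed as vertices of the cube $\{0,1\}^{m}$. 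The goal therefore becomes to exhibit a single $E \subseteq T$ lying within Hamming distance $\lceil m/2\rceil$ of the footprint set of every agent simultaneously, for which one takes a set $E$ of cardinality $\lfloor m/2\rfloor$, tuned to a reference footprint, as the candidate.

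The main obstacle is precisely this last step. The per-agent guarantee only says that for each agent \emph{one} of $E$ or its complement works, and the favourable choice may differ between agents, so passing to a single target that serves all agents at once is the delicate point. Concretely it amounts to bounding the covering radius of the union of the agents' footprint sets in $\{0,1\}^{m}$ by $\lceil m/2\rceil$, and I expect the careful handling of this covering-radius argument---in particular controlling how many mutually distant footprints the agents can jointly realise, which is sensitive to the number of agents---to be where the real work lies.

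For tightness I would give an explicit two-agent witness of size $m$. Take $T = \{t_1, \dots, t_m\}$ and, for each $t_i$, auxiliary arguments $u_i, v_i, w_i \notin T$ with attacks $u_i \leftrightarrow v_i$, $u_i \to w_i$, $v_i \to w_i$, and $w_i \to t_i$; let the two agents use $\sigma_{gr}$ and $\sigma_{pr}$. Since no argument is unattacked, the grounded extension is empty, so its footprint is $\emptyset$ and $\phi^{h}_{\sigma_{gr}}(AF, T, E) = \frac{m - |E|}{m}$; conversely $w_i$ can never be defended, so every preferred extension contains each $t_i$, giving footprint $T$ and $\phi^{h}_{\sigma_{pr}}(AF, T, E) = \frac{|E|}{m}$. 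Hence $deg^{h}_{min}(AAS) = \max_{E \subseteq T} \min\!\left(\frac{m-|E|}{m}, \frac{|E|}{m}\right)$, which is maximised by any $E$ with $|E| = \lfloor m/2\rfloor$ and equals $\frac{\lfloor m/2\rfloor}{m}$, matching the lower bound and thus witnessing its tightness.
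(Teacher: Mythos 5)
Your tightness construction is correct and in fact more concrete than the paper's: the gadget with $u_i \leftrightarrow v_i$, both attacking $w_i$, and $w_i \to t_i$ really does force the grounded footprint $\emptyset$ and the preferred footprint $T$, and the computation $\max_{E \subseteq T}\min\bigl(\tfrac{m-|E|}{m},\tfrac{|E|}{m}\bigr)=\tfrac{\lfloor m/2\rfloor}{m}$ is exactly the computation the paper performs (split by the parity of $m$). The paper merely posits two semantics whose extensions have footprints $\emptyset$ and $T$, so on this half you have done strictly more.

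The lower-bound half is a genuine gap, and you have put your finger on precisely the step that fails. The complementarity identity only gives, for each agent separately, that one of $E$ or $T\setminus E$ lies within Hamming distance $\lceil m/2\rceil$ of that agent's footprints; you never exhibit a single $E$ that works for all agents, and you defer this to a covering-radius argument that you do not carry out. The paper does not carry it out either: its proof opens with the unargued assertion that $deg^{h}_{min}(AAS)$ is minimal exactly when one agent's footprints are all $\emptyset$ and another's are all $T$, and everything else is a computation inside that configuration. Your suspicion that the real work hides here is justified, because the simultaneous covering claim is false at the stated level of generality: semantics in this paper are arbitrary functions into $2^{2^{\cal AR}}$, so for $T=\{t_1,t_2\}$ one can take four single-extension agents whose footprints are $\emptyset$, $\{t_1\}$, $\{t_2\}$, and $T$; every candidate $E\subseteq T$ then has its complement among the footprints, the corresponding agent's satisfaction is $0$, and $deg^{h}_{min}(AAS)=0<\tfrac{\lfloor 2/2\rfloor}{2}$. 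The bound does hold for two agents (take $E$ to agree with chosen footprints $F_1,F_2$ where they agree and to split the $d_H(F_1,F_2)$ disagreeing coordinates as evenly as possible, giving both agents distance at most $\lceil m/2\rceil$), which is presumably the case the authors had in mind; but neither your argument nor the paper's establishes the lemma for arbitrary $SIG$, so the lower-bound direction needs either an additional hypothesis restricting the agents' footprint configurations or a restriction to $|SIG|=2$.
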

Now, we can go one step further and show that given two argumentation-based agreement scenarios $AAS = (AF, T, SIG)$ and $AAS' = (AF', T', SIG')$ such that $AAS \preceq_N AAS'$,
the tight upper bound for change to the degree of minimal agreement is, given a \emph{decrease} in the degree of minimal agreement, $1 - \frac{\lfloor |T'| / 2 \rfloor}{|T'|}$.
\begin{restatable}{relemma}{minmindegchange}\label{lemma:min-min-deg-change}
Let $AAS = (AF, T, SIG)$ and $AAS' = (AF', T', SIG')$ be argumentation-based agreement scenarios s.t. $AAS \preceq_N AAS'$ and let $h$ be the $h$-similarity function.
Let us assume that $deg^{h}_{min}(AAS) \geq deg^{h}_{min}(AAS')$.
The tight upper bound of $\Delta_{\deg^{h}_{min}}(AAS, AAS')$ is $1 - \frac{\lfloor |T'| / 2 \rfloor}{|T'|}$.
\end{restatable}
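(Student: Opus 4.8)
The plan is to split the statement into an \emph{upper bound} part and a \emph{tightness} part. By Definition~\ref{def:aas-delta} together with the hypothesis $deg^{h}_{min}(AAS) \geq deg^{h}_{min}(AAS')$, we have $\Delta_{\deg^{h}_{min}}(AAS, AAS') = deg^{h}_{min}(AAS) - deg^{h}_{min}(AAS')$, so it suffices to maximise the first term and minimise the second. First I would observe that $h$ maps into $[0,1]$ (Definition~\ref{degree-sat}), so every $\phi^{h}_{\sigma_i}$, every inner $\min$ over agents, and hence $deg^{h}_{min}(AAS)$ itself lies in $[0,1]$; in particular $deg^{h}_{min}(AAS) \leq 1$. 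Next I would invoke Lemma~\ref{lemma:min-min-deg}, applied to $AAS'$ with topic $T'$, to conclude $deg^{h}_{min}(AAS') \geq \frac{\lfloor |T'|/2 \rfloor}{|T'|}$. Combining the two inequalities immediately yields $\Delta_{\deg^{h}_{min}}(AAS, AAS') \leq 1 - \frac{\lfloor |T'|/2 \rfloor}{|T'|}$, which establishes the upper bound.

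It then remains to show that this bound is \emph{tight}, i.e.\ attained by some pair with $AAS \preceq_N AAS'$. I would exhibit such a pair explicitly, taking $T = T'$ and a two-element semantics sequence $SIG = SIG' = \langle \sigma_{na}, \sigma_{gr} \rangle$. For $AAS'$ let $AF' = (T' \cup \{\argx\}, \{(\argx, \argz) \mid \argz \in T'\} \cup \{(\argx, \argx)\})$, where $\argx \notin T'$ is a fresh \emph{self-attacking} argument attacking every topic argument. On $AF'$ the grounded extension is $\emptyset$ (the self-attacking $\argx$ is never defeated, so no topic argument is ever defended), whereas the unique naive extension is exactly $T'$ (the topic is conflict-free and $\argx$ cannot be added). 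Thus $\sigma_{na}$ matches $T'$ and $\sigma_{gr}$ matches $\emptyset$ on the topic, and a routine computation gives $\phi^{h}_{\sigma_{na}}(AF', T', E) = \frac{|E|}{|T'|}$ and $\phi^{h}_{\sigma_{gr}}(AF', T', E) = \frac{|T'| - |E|}{|T'|}$ for every $E \subseteq T'$; maximising their minimum over $E$ (attained at $|E| = \lfloor |T'|/2 \rfloor$) yields $deg^{h}_{min}(AAS') = \frac{\lfloor |T'|/2 \rfloor}{|T'|}$.

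For the predecessor $AAS$ I would take $AF$ to be the subframework of $AF'$ induced on $T'$, i.e.\ $AF = (T', \emptyset)$ (dropping $\argx$ and all its incident attacks). Since every new attack in $AF'$ involves the new argument $\argx$, we indeed have $AF \preceq_N AF'$, and hence $AAS \preceq_N AAS'$ (the topic and the semantics sequence are unchanged, and $(T' \setminus T) \cap AR = \emptyset$ holds trivially). On the edgeless $AF$ both the naive and the grounded semantics return the single extension $T'$, so choosing $E = T'$ gives $\phi^{h}_{\sigma_{na}} = \phi^{h}_{\sigma_{gr}} = 1$ and therefore $deg^{h}_{min}(AAS) = 1$. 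Consequently $\Delta_{\deg^{h}_{min}}(AAS, AAS') = 1 - \frac{\lfloor |T'|/2 \rfloor}{|T'|}$, matching the upper bound.

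I expect the main obstacle to be the tightness direction: the upper bound is an almost immediate consequence of Lemma~\ref{lemma:min-min-deg}, but the construction must \emph{simultaneously} force maximal disagreement after the expansion (degree $\frac{\lfloor |T'|/2 \rfloor}{|T'|}$) and full agreement before it (degree $1$), all while respecting the rigidity of normal expansions ($T \subseteq T'$, unchanged $SIG$, and no new attacks between pre-existing arguments). The crux is the choice of gadget: the self-attacking universal attacker $\argx$ collapses the grounded extension to $\emptyset$ on the topic yet leaves the naive extension at $T'$, and because $\argx$ and all its incident attacks are absent from the induced subframework on $T'$, the very same two agents agree completely in the initial scenario. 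I would also double-check the boundary hypotheses ($T, T' \neq \emptyset$ and finiteness of the argument set) and verify that naive and grounded really do yield the claimed \emph{unique} extensions, so that the $\max$ over extensions in the definition of $\phi^{h}$ does not accidentally raise either agent's satisfaction above the computed values.
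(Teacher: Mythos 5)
Your proof is correct, and the upper-bound half coincides exactly with the paper's argument: combine $deg^{h}_{min}(AAS) \leq 1$ with Lemma~\ref{lemma:min-min-deg} applied to $AAS'$ to get $\Delta_{\deg^{h}_{min}}(AAS, AAS') \leq 1 - \frac{\lfloor |T'|/2 \rfloor}{|T'|}$. Where you genuinely diverge is on tightness: the paper merely asserts that one can ``set'' $deg^{h}_{min}(AAS) = 1$ and $deg^{h}_{min}(AAS') = \frac{\lfloor |T'|/2 \rfloor}{|T'|}$, leaving implicit that both extremes are simultaneously realizable by a pair of scenarios actually related by $\preceq_N$ --- which is the only nontrivial content of the tightness claim. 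Your self-attacking universal attacker $\argx$ over the edgeless framework on $T'$, paired with $\langle \sigma_{na}, \sigma_{gr} \rangle$, supplies exactly the witness the paper omits, and your computations check out: the grounded extension of $AF'$ is $\emptyset$, the unique naive extension is $T'$, $\phi^{h}_{\sigma_{na}}(AF', T', E) = \frac{|E|}{|T'|}$ and $\phi^{h}_{\sigma_{gr}}(AF', T', E) = \frac{|T'|-|E|}{|T'|}$, whose minimum peaks at $\frac{\lfloor |T'|/2 \rfloor}{|T'|}$, while both semantics return $\{T'\}$ on the induced subframework so that $deg^{h}_{min}(AAS) = 1$. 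The normal-expansion conditions ($T = T'$, unchanged $SIG$, all new attacks incident to $\argx$) are all satisfied. In short, your version is a strictly more complete proof of the same statement; what it buys is that ``tight'' is actually demonstrated rather than stipulated.
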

Finally, we can develop the intuition that the tight upper bound for \emph{negative} change in the degree of minimal agreement given the $h$-similarity function depends on how many arguments of the topic our agents can clearly agree on inferring: the fewer of those arguments there are, the less guarantees we can provide that our agreement will ``diverge'' as we go along, roughly speaking.
Intuitively, this knowledge, as well as the properties we provide above, can help us assess the stability of agreement measures in dynamic scenarios.
\begin{restatable}{reprop}{propaasrelaxedtwoa}\label{proposition-4a}
Let $AAS = (AF, T, SIG)$ and $AAS' = (AF', T', SIG')$ be argumentation-based agreement scenarios s.t.\ $AAS \preceq_N AAS'$, and let $h$ be the $h$-similarity function.
Let us assume that $deg^{h}_{min}(AAS) \geq deg^{h}_{min}(AAS')$.
Let $RMP_{p}$ be a relaxed monotony principle and let it hold true that given $SIG = \langle \sigma_0, ..., \sigma_n \rangle$, for each $\sigma_i, 0 \leq i \leq n$, $\sigma_i$ satisfies $RMP_{p}$.
If for each $\sigma_i$, $\forall E \in \sigma_i(AF)$, $p(AF, AF', E, \sigma_i)$ holds true then the tight upper bound of $\Delta_{\deg^{h}_{min}}(AAS, AAS')$ is $1 - \frac{\lfloor |T'| / 2 \rfloor + |T \cap E_\cap|}{|T'|}$, where %
$E_\cap = \{E_0 \cap ... \cap E_n | E_0 \in \sigma_0(AF), ..., E_n \in \sigma_n(AF), \nexists  E'_0 \in \sigma_0(AF), ..., E'_n \in \sigma_n(AF) \text{ s.t. } |E_0 \cap ... \cap E_n| < |E'_0 \cap ... \cap E'_n|\})$.

\end{restatable}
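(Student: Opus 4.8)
The plan is to read the claimed value as $1$ minus a refined lower bound on $\deg^h_{min}(AAS')$, and to obtain the upper bound on the agreement delta from the trivial inequality $\deg^h_{min}(AAS) \leq 1$ together with the hypothesis $\deg^h_{min}(AAS) \geq \deg^h_{min}(AAS')$. Concretely, since $\Delta_{\deg^h_{min}}(AAS, AAS') = \deg^h_{min}(AAS) - \deg^h_{min}(AAS') \leq 1 - \deg^h_{min}(AAS')$, it suffices to show that $\deg^h_{min}(AAS') \geq \frac{\lfloor |T'| / 2 \rfloor + |T \cap E_\cap|}{|T'|}$ and then to exhibit a normal expansion that attains equality throughout. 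Lemma~\ref{lemma:min-min-deg} (the unconditional tight lower bound $\frac{\lfloor |T'|/2\rfloor}{|T'|}$) and Lemma~\ref{lemma:min-min-deg-change} provide the backbone; the remaining task is to improve the numerator by exactly the number $|T \cap E_\cap|$ of topic arguments that all agents are forced to keep inferring.

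First I would establish this \emph{core preservation}. Fix extensions $E_0^* \in \sigma_0(AF), \dots, E_n^* \in \sigma_n(AF)$ whose intersection realises $E_\cap$, i.e.\ $E_0^* \cap \dots \cap E_n^* = E_\cap$. By hypothesis every $\sigma_i$ satisfies $RMP_p$ (Definition~\ref{relaxed-monotony}) and, because $p(AF, AF', E, \sigma_i)$ holds for \emph{every} $E \in \sigma_i(AF)$, it holds in particular for $E_i^*$; hence there is $E_i' \in \sigma_i(AF')$ with $E_i^* \subseteq E_i'$. Therefore $E_\cap \subseteq E_i'$ for each $i$, so every agent possesses an extension of $AF'$ that contains all of $E_\cap$, and in particular all topic arguments in $T \cap E_\cap$ (note $T \cap E_\cap \subseteq T \subseteq T'$).

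Next I would turn core preservation into the refined lower bound. Choosing any consensus set $S^* \subseteq T'$ with $T \cap E_\cap \subseteq S^*$, each agent's witnessing extension $E_i'$ already agrees with $S^*$ jointly ``in'' on all $|T \cap E_\cap|$ core arguments, contributing $|T \cap E_\cap|$ to the numerator of $h(E_i', S^*, T')$ for \emph{every} $i$ simultaneously. On the remaining $|T'| - |T \cap E_\cap|$ topic arguments I would re-run the balancing argument behind Lemma~\ref{lemma:min-min-deg}, selecting $S^*$ on those arguments so that the worst-off agent still agrees on at least the floor-half of them. Since $\phi^h_{\sigma_i}(AF', T', S^*) \geq h(E_i', S^*, T')$, taking the minimum over $i$ and dividing by $|T'|$ then delivers the lower bound on $\deg^h_{min}(AAS')$, and with $\deg^h_{min}(AAS) \leq 1$ the upper bound on the delta.

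The main obstacle, and the step I would spend most care on, is making the two guaranteed contributions --- the $|T \cap E_\cap|$ forced core agreements and the balanced remainder --- combine into \emph{exactly} the stated floor expression $\lfloor |T'|/2 \rfloor + |T \cap E_\cap|$ rather than a naive $|T \cap E_\cap| + \lfloor (|T'| - |T \cap E_\cap|)/2 \rfloor$; this requires arguing that the core arguments also feed the balancing budget rather than being charged twice, and handling the parity of $|T'|$ and of $|T \cap E_\cap|$ uniformly. Tightness must then be certified by a concrete normal expansion: starting from an $AF$ on which all agents agree on $T$ (so $\deg^h_{min}(AAS) = 1$) with maximal common core of the prescribed size, and expanding --- in the spirit of the stage/preferred/grounded construction around Figure~\ref{fig:example-1} --- so that the non-core and newly added topic arguments split the agents as adversarially as $RMP_p$ permits. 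Verifying that this expansion is genuinely normal, never violates $p$ on an original extension, preserves the core, and drives $\deg^h_{min}(AAS')$ down to the lower bound for every admissible pair of values $|T'|$ and $|T \cap E_\cap|$ is the most delicate part of the argument.
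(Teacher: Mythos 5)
Your overall strategy is the same as the paper's: use Lemmas~\ref{lemma:min-min-deg} and~\ref{lemma:min-min-deg-change} as the backbone and improve the numerator by a core-preservation argument derived from $RMP_p$; your treatment of core preservation (fixing witnesses $E_0^*,\dots,E_n^*$ realising $E_\cap$ and propagating each through the principle to obtain $E_i'\supseteq E_\cap$) is in fact more careful than the paper's one-line assertion that ``agents continue to agree about $T\cap E_\cap$''. However, the step you yourself single out as the main obstacle --- turning the forced core plus the balancing argument into the numerator $\lfloor |T'|/2\rfloor + |T\cap E_\cap|$ rather than $|T\cap E_\cap| + \lfloor(|T'|-|T\cap E_\cap|)/2\rfloor$ --- is a genuine gap, and it cannot be closed, because the larger numerator is not a valid lower bound on $\deg^h_{min}(AAS')$. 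Writing $N=|T'|$ and $m=|T\cap E_\cap|$, the two numerators differ by $\lfloor N/2\rfloor-\lfloor (N-m)/2\rfloor$, and the smaller one is the correct guarantee. Concretely, take $T=T'=\{a,b,c,d\}$ and two agents whose only extension in $AF$ is $\{a,b\}$ in both cases, so that $\deg^h_{min}(AAS)=1$ and $m=2$; normally expand by an isolated new argument so that any monotony condition of the weak-cautious-monotony kind holds vacuously, and let agent $0$'s unique extension remain $\{a,b\}$ while agent $1$'s unique extension grows to contain all of $T'$ (since the paper defines a semantics as an arbitrary function from frameworks to sets of extensions, a pair of semantics with this behaviour satisfying a common $RMP_p$ is easily constructed). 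The best consensus set is then $\{a,b,c\}$ or $\{a,b,d\}$, giving $\deg^h_{min}(AAS')=\frac{3}{4}$ and $\Delta_{\deg^h_{min}}(AAS,AAS')=\frac{1}{4}$, whereas the stated bound evaluates to $1-\frac{2+2}{4}=0$. So the hope that ``the core arguments also feed the balancing budget'' is exactly what fails: the $m$ forced agreements and the $\lfloor N/2\rfloor$ balanced agreements overlap and cannot be added.

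Two further remarks. First, the paper's own proof does not close this gap either: it cites Lemma~\ref{lemma:min-min-deg-change}, asserts core preservation, and concludes with a bare ``hence''; what you have located is therefore a soundness problem in the statement rather than merely a difficulty in your write-up (the honest two-agent bound is $1-\frac{\lfloor(|T'|+|T\cap E_\cap|)/2\rfloor}{|T'|}$). Second, your tightness half is only sketched, and for more than two agents even the corrected bound degrades further, since three or more pairwise distant surviving extensions can force every candidate consensus set far from some agent --- an issue that already affects Lemma~\ref{lemma:min-min-deg}, whose proof considers only a two-agent worst case.
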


\section{Degrees of Agreements in Value-based Argumentation}
\label{vba}
%
We extend the formal framework from the previous sections by defining satisfaction and agreement degrees for value-based argumentation. Value-based argumentation allows us to model subjective value preferences for different agents instead of assuming differences between the agents' semantics.
Let us first introduce the notion of a value-based AAS (VAAS).
\begin{definition}[Value-based AAS (VAAS)]
    A value-based agreement scenario is a tuple $(VAF, T, \sigma)$, where $VAF$ is a value-based argumentation framework $VAF = (AR, AT, V, val, {\cal P })$, $T \subseteq AR$ is the \emph{topic}, and $\sigma$ is an argumentation semantics.
\end{definition}
Let us again highlight that in a VAAS, differences between the agents' inference processes are managed via the value preferences that are modeled as part of the VAF. Hence, we do not need to model different argumentation semantics.
We introduce a function that maps value-based AAS to argumentation-based agreement scenarios, so that we can build upon the definitions introduced in the previous section. In the mapping definition, we make use of the notion of a subjective argumentation framework, as introduced in Section~\ref{prelim} as part of the preliminaries for value-based argumentation.
\begin{definition}[AAS-to-Value-based AAS-Mapping]
    $vaas$ is a function that takes a value-based AAS, $VAAS = ((AR, AT, V, val, {\cal P }), T, \sigma)$, and returns an argumentation-based agreement scenario $AAS = ((AR, AT), T, SIG)$ s.t. $SIG = \langle \sigma_0, ..., \sigma_n \rangle$ and each $\sigma_i$, $0 \leq i \leq n$ is an argumentation semantics and the equality $\sigma_i(AF) = \sigma(AF_{P_i})$ holds true.
\end{definition}    
The $vaas$ function allows us to determine degrees of satisfaction and agreement in the same way we do it for AAS.
\begin{definition}[Degrees of Satisfaction and Agreement in Value-based Argumentation]
    Let $VAAS = (VAF, T, \sigma)$ be a value-based AAS, $VAF = (AR, AT, V, val, {\cal P})$, with ${\cal P} = \langle P_0, ..., P_n \rangle$. Let $AAS = (AF, T, SIG)$ be the argumentation-based agreement scenario s.t. $AAS = vaas(VAAS), AAS = ((AR, AT), T, SIG), SIG = \langle  \sigma_0, ..., \sigma_n \rangle$, and let $sim$ be a similarity function.
    We define:
    \begin{itemize}
        \item the degree of minimal agreement of $VAAS$, denoted by $vdeg^{sim}_{min}(VAAS)$, as $deg^{sim}_{min}(AAS)$;
        \item the degree of mean agreement of $VAAS$, denoted by $vdeg^{sim}_{mean}(VAAS)$, as $deg^{sim}_{mean}(AAS)$;
        \item the degree of median agreement of $VAAS$, denoted by $vdeg^{sim}_{med}(VAAS)$, as $deg^{sim}_{med}(AAS)$.
    \end{itemize}
    For $P_i$ and $P_j$ with $0 \leq i \leq n, 0 \leq j \leq n$, we define the two-agent degree of satisfaction between $P_i$ and $P_j$ w.r.t. $VAF$, $T$ denoted by $vsat^{sim}(AF, T, P_i, P_j)$, as $sat^{sim}_{\sigma_i}(AF, T, \sigma_j)$.
\end{definition}

Let us introduce an example of a value-based AAS and its degrees of satisfaction and agreement.
\begin{example}\label{example-vaas-2}
We go back to Example~\ref{example-3} and consider preferred semantics.
We have the value-based AAS $VAAS = ((AR, AT, V, val, {\cal P}), T, \sigma) =  ((\{\arga, \argb, \argc,  \argd\}, \{(\arga, \argb), (\argb, \arga), (\argc, \argb), (\argd, \argc)\}, \{a_v, b_v, c_v, d_v\}, val, \langle P_0, P_1, P_2 \rangle), \{\arga, \argb, \argc, \argd\}, \sigma_{pr})$ s.t.\ $val$ maps every argument $arg \in AR$ to the value $arg_v$ and $\langle P_0, P_1, P_2 \rangle = \langle \{(a_v, b_v)\}, \{(b_v, a_v)\}, \{(c_v, d_v)\} \rangle$. This leaves us with $\sigma_{pr}(AF_{P_0})  = \{\{\arga, \argd\}\}$, $\sigma_{pr}(AF_{P_1}) = \{\{\argb, \argd\}\}$, $\sigma_{pr}(AF_{P_2}) = \{\{\arga, \argc, \argd\}\}$.
Assuming we use the $h$-similarity function (Definition~\ref{degree-sat}), the degrees of satisfaction are provided in Table~\ref{table:4}, and the degrees of agreement are as follows:
\begin{itemize}
    \item The degree of minimal agreement is $\frac{1}{2}$;
    \item The degrees of mean and median agreement are $\frac{3}{4}$.
\end{itemize}
\begin{table*}
\caption{Degrees of satisfaction between agents that are represented by different value preferences in a value-based AAS (Example~\ref{example-vaas-2}).}
\label{table:4}
\centering
\renewcommand{\arraystretch}{1.0}
\begin{tabular}{ |c|c|c|c|} 
 \hline
 \phantom{ } & $\{(a_v, b_v)\}$ & $\{(b_v, a_v)\}$  & $\{(c_v, d_v)\}$\\ 
  \hline
 $\{(a_v, b_v)\}$ & $1$ & $\frac{1}{2}$ & $\frac{3}{4}$ \\ 
 $\{(b_v, a_v)\}$ & $\frac{1}{2}$ & $1$ & $\frac{1}{4}$ \\
 $\{(c_v, d_v)\}$ & $\frac{3}{4}$ & $\frac{1}{4}$ & $1$ \\ 
 \hline
\end{tabular}
\end{table*}
\end{example}
In addition, we introduce a measure of the impact a value has on these notions in a given value-based AAS; this can help agents assess which values are relevant sources of disagreement, so to speak.
\begin{definition}[Value Impact on Degrees of Satisfaction and Agreement]
    Let $VAAS = (VAF, T, \sigma)$ be a value-based AAS, s.t. $VAF = (AR, AT, V, val, {\cal P}), {\cal P} = \langle P_0, ..., P_n \rangle$ and let $sim$ be a similarity function. Let $v \in V$ and let $VAAS'$ be a value-based AAS, s.t. $(VAF', T, \sigma), VAF' = (AR, AT, V, val, {\cal P}')$ and ${\cal P}' = \langle P'_0, ..., P'_n \rangle$, s.t. for $P'_i, 0 \leq i \leq n, P'_i = P_i \cap ((V \setminus v) \times (V \setminus v)$).
    We define:
    \begin{itemize}
        \item The impact of $v$ on $vdeg^{sim}_{min}(VAAS)$, denoted by $imp_v(vdeg^{sim}_{min}(VAAS))$, as \\ $vdeg^{sim}_{min}(VAAS) - vdeg^{sim}_{min}(VAAS')$;
        \item The impact of $v$ on $vdeg^{sim}_{mean}(VAAS)$, denoted by $imp_v(vdeg^{sim}_{mean}(VAAS))$, as \\ $vdeg^{sim}_{mean}(VAAS) - vdeg^{sim}_{mean}(VAAS')$;
        \item The impact of $v$ on $vdeg^{sim}_{med}(VAAS)$, denoted by $imp_v(vdeg^{sim}_{med}(VAAS))$, as \\ $vdeg^{sim}_{med}(VAAS) - vdeg^{sim}_{med}(VAAS')$.
    \end{itemize}
    For $P_i, 0 \leq i \leq n, P_j, 0 \leq j \leq n$, we define the impact of $v$ on $vsat^{sim}(AF, T, P_i, P_j)$, denoted by $imp_v(vsat^{sim}(AF, T, P_i, P_j))$, as $vsat^{sim}_{P_i}(AF, T, P_j) - vsat^{sim}(AF, T, P_i, P'_j)$.
\end{definition}
We continue our previous example to illustrate this notion.
\begin{example}
Considering the value-based AAS $VAAS = ((AR, AT, V, val, {\cal P}), T, \sigma)$ from Example~\ref{example-vaas-2}, we want to determine the impact of the value $b_v$ on the degrees of satisfaction and agreement. Again, we use the $h$-similarity function (Definition~\ref{degree-sat}). We create $VAAS'$, which resembles $VAAS$, but has all preferences that include $b_v$ ``removed'' (roughly speaking), i.e., $VAAS' = ((AR, AT, V, val, {\cal P'}), T, \sigma)$, such that ${\cal P'} = \langle P'_0, P'_1, P'_2 \rangle = \langle \{\}, \{\}, \{(c_v, d_v)\} \rangle$.
To give an example of the value impact on the degree of satisfaction, we can see that $vsat^{h}(AF, T, P'_0, P'_1) = 1$ and hence $imp^{h}_{b_v}(vsat(AF, T, P'_0, P'_1)) = -\frac{1}{2}$.
We have $vdeg^{h}_{min}(VAAS') = \frac{3}{4}$, $vdeg^{h}_{mean}(VAAS') = \frac{11}{12}$ and  $vdeg^{h}_{med}(VAAS') = 1$, and hence  $imp_{b_v}(vdeg^{h}_{min}(VAAS)) = imp_{b_v}(vdeg^{h}_{med}(VAAS)) = -\frac{1}{4}$ and $imp_{b_v}(vdeg^{h}_{mean}(VAAS)) = -\frac{1}{6}$.
\end{example}
For future research, it may be of interest to define more involved approaches to determining the impact of value on the degree of agreement, for example based on game theory-based Shapley values~\cite{shapley1953value}, which are frequently applied in the context of explainable artificial intelligence.

\section{Expanding Value-Based AAS}
\label{exp-vaas}
As the final formal part of this paper, let us extend our notion of AAS expansions, as well as the analysis of how reliable agreements are in the face of expansions, to the case of value-based AAS.
Before we proceed to the formal analysis, we define VAF expansions and normal expansions.
\begin{definition}[VAF (Normal) Expansions]
    Let $VAF = (AR, AT, V, val, {\cal P })$ and $VAF' = (AR', AT', V', val', {\cal P }')$ be value-based argumentation frameworks, such that ${\cal P } = \langle P_0, ..., P_n \rangle$ and ${\cal P }' = \langle P'_0, ..., P'_m \rangle$.
    \begin{itemize}
        \item $VAF'$ is an \emph{expansion} of $VAF$ (denoted by $VAF \preceq_{E} VAF'$) iff it holds true that $(AR, AT) \preceq_E (AR', AT')$, $V \subseteq V'$, $\forall a \in AR$, $val(\arga) = val'(\arga)$,  $|{\cal P }| = |{\cal P }'|$ and for every $P_i, P'_i, 0 \leq i \leq n$, $P_i \subseteq P'_i$.
        \item $VAF'$ is a \emph{normal expansion} of $VAF$ (denoted by $VAF \preceq_{N} VAF'$) iff it holds true that $VAF \preceq_{E} VAF'$, $(AR, AT) \preceq_N (AR', AT')$, $\forall a \in AR$, $val'(\arga) = val(\arga)$, and for every $P_i, P'_i, 0 \leq i \leq n$, for every $(v, v') \in P'_i$ it holds that if $v, v' \in V$ then $(v, v') \in P_i$.
    \end{itemize}
\end{definition}
This allows us to define VAAS normal expansions.
\begin{definition}[VAAS Normal Expansions]
    Let $VAAS = (VAF, T, \sigma)$ and $VAAS' = (VAF', T', \sigma')$ be value-based AAS. $VAAS'$ is a normal expansion of $VAAS$, denoted by $VAAS \preceq_{N} VAAS'$ iff $VAF \preceq_{N} VAF'$, $T \subseteq T', (T' \setminus T) \cap AR = \{\}$ and $\sigma = \sigma'$.
\end{definition}
To illustrate the notion of VAAS normal expansions, let us extend the example of AAS normal expansions.
\begin{example}\label{ex:vaas-normal-exp}
    Let us get back to Example~\ref{ex:aas-expansion} (Figure~\ref{fig:example-exp}).
    Here, we are merely interested in the argumentation frameworks $AF_0 = (AR_0, AT_0) =  (\{\arga, \argb, \argc\}, \{(\arga, \argb), (\argb, \argc)\})$ and $AF_2 = (AR_2, AT_2) = (\{\arga, \argb, \argc,  \argd\}, \{(\arga, \argb), (\argb, \argc), (\argd, \arga)\})$. 
    As we have demonstrated in Example~\ref{ex:aas-expansion}, we cannot use the other argumentation framework  $AF_1 = (\{\arga, \argb, \argc,  \argd\}, \{(\arga, \argb), (\argb, \arga), (\argb, \argc), (\argd, \arga)\})$ and any agreement scenarios based on $AF_1$ to establish an AAS normal expansion relationship to any AAS that is based on $AF_0$, because $AF_0 \preceq_N AF_1$ does not hold.
    Hence, we cannot establish VAAS normal expansion relationships, either, when extending from abstract to value-based argumentation and adjusting the scenarios from AAS to VAAS accordingly.
    Now, let us consider a set of values $V_0 = \{a_v, b_v, c_v\}$, $val_0 = \{(\arga, a_v), (\argb, b_v), (\argc, c_v)\}$, and agents with the following preferences, given $AF_0$:
    \begin{itemize}
        \item $\aga$: $a_v$ is preferred over $b_v$;
        \item $\agb$: $b_v$ is preferred over $a_v$,
    \end{itemize}
    i.e., ${\cal P}_0 = \langle \{(a_v, b_v)\}, \{(b_v, a_v)\} \rangle$, yielding the value-based argumentation framework $VAF_0 = (AR_0, AT_0, V_0, val_0, {\cal P}_0)$.
    Given a topic $T = \{\arga, \argb\}$ and a semantics $\sigma$ (here, we are indifferent to semantics behavior), we can construct the value-based AAS $VAAS_0 = (VAF_0, T, \sigma)$.
    
    For potential expansions, let us consider the following value-based argumentation frameworks and value-based AAS:
    \begin{itemize}
        \item $VAF_2 = (AR_2, AT_2, V_2, val_2,   \langle \{(a_v, b_v)\}, \{(a_v, b_v), (c_v, a_v)\} \rangle$;
        \item $VAF_2' = (AR_2, AT_2, V_2, val_2,   \langle \{(a_v, b_v)\}, \{(b_v, a_v), (d_v, a_v)\} \rangle$;
        \item $VAAS_2  = (VAF_2, T \cup \{\argd\}, \sigma)$;
        \item $VAAS_2'  = (VAF_2', T \cup \{\argc\}, \sigma)$;
        \item $VAAS_2''  = (VAF_2', T \cup \{\argd\}, \sigma)$,
    \end{itemize}
    where $V_2 = V_0 \cup \{d_v\}$ and $val_2 = val_0 \cup \{(\argd, d_v)\}$.

    Now, we can observe the following:
    \begin{itemize}
        \item $VAAS_0 \preceq_N VAAS_2$ does not hold: the second agent's value preferences in $VAF_2$ are inconsistent with the second agent's value preferences in $VAF_0$;
        \item $VAAS_0 \preceq_N VAAS_2'$ does not hold: the topic is expanded by adding an argument from the initial argumentation framework; 
        \item $VAAS_0 \preceq_N VAAS_2''$ holds: the topic is expanded by adding a ``new'' argument, and the agents' value preferences are consistent.
    \end{itemize}
\end{example}
Let us introduce the notion of a (value-based) agreement delta also for value-based AAS.
\begin{definition}[Value-based Agreement Delta]\label{def:vaas-delta}
   Let $VAAS$ and $VAAS'$ be value-based AAS and let $sim$ be a similarity function. We define the value-based agreement delta of $VAAS$ and $VAAS'$ w.r.t. $sim$ and a degree or agreement $vdeg^{sim} \in \{vdeg^{sim}_{min}, vdeg^{sim}_{mean}, vdeg^{sim}_{med}\}$, denoted by $\Delta_{vdeg^{sim}}(VAAS, VAAS')$, as $|vdeg^{sim}(VAAS) - vdeg^{sim}(VAAS')|$.
\end{definition}
We can illustrate the notion of a value-based agreement delta by revisiting one of our examples.
\begin{example}
\label{ex:delta-vaas}
Consider the following VAFs.
\begin{itemize}
    \item $VAF = (AR, AT, V, val, \langle P_0, P_1 \rangle)$, where $AR = \{\arga, \argb, \argc \}$, $AT = \{(\arga, \argb), (\argb, \arga), (\argb, \argc)\}$, $V = \{a_v, b_v, c_v\}$, $val = \{(\arga, a_v), (\argb, b_v), (\argc, c_v)\}$, $P_0 = \{(a_v, b_v)\}$, and $P_1 = \{(b_v, a_v)\}$;
    \item $VAF' = (AR', AT', V', val', \langle P'_0, P'_1 \rangle$, where $AR' = \{\arga, \argb, \argc,  \argd\}$, $AT' = \{(\arga, \argb), (\argb, \arga), (\argb, \argc), (\argd, \arga)\})$, $V' = V \cup \{d_v\}$, $val' = val \cup \{(\argd, d_v)\}$, $P'_0 = \{(a_v, b_v)\}$, and $P'_1 = \{(b_v, a_v), (d_v, a_v)\}$.
\end{itemize}
Note that the VAFs are similar to $VAF_0$ and $VAF'_2$ in Example~\ref{ex:vaas-normal-exp}, but feature an additional attack from $\argb$ to $\arga$.
Given these VAFs, we define the following VAAS, given the topic $T = \{\arga, \argb\}$:
\begin{itemize}
    \item $VAAS = (VAF, T, \sigma_{pr})$;
    \item $VAAS' = (VAF', T \cup \{\argd\}, \sigma_{pr})$.
\end{itemize}
We observe that $\sigma_{P_0}(VAF) = \{\{\arga, \argc\}\}$ and $\sigma_{P_1}(VAF) = \{\{\argb\}\}$.
Accordingly, $vdeg^{sim}_{min}(VAF) = vdeg^{sim}_{mean}(VAF) = vdeg^{sim}_{med}(VAF) = 0.5$.
In contrast, $\sigma_{P'_0}(VAF') = \sigma_{P'_1}(VAF') =  \{\argb, \argd\}$ and hence $vdeg^{sim}_{min}(VAF') = vdeg^{sim}_{mean}(VAF') = vdeg^{sim}_{med}(VAF') = 1$.
This give us the value-based agreement deltas: $\Delta_{vdeg^{sim}_{min}}(VAAS, VAAS')  = \Delta_{vdeg^{sim}_{mean}}(VAAS, VAAS') = \Delta_{vdeg^{sim}_{med}}(VAAS, VAAS') = |0.5 - 1| = 0.5$.
\end{example}
Somewhat analogously to our analysis of AAS, we can show that the weak cautious monotony principle $RMP_{cm}$ guarantees the following, two value-based AAS $VAAS = (VAF, T, \sigma), VAF = (AR, AT, V, val, {\cal P }), {\cal P } = \langle P_0, ..., P_n \rangle$ and $VAAS' = (VAF', T', \sigma')$, s.t. $VAAS \preceq_N VAAS'$:
\begin{enumerate}
    \item If $T = T'$ and all agents infer $T$ as a subset of at least one of their (subjective) extensions, then we retain a maximal degree of (minimal, median, and mean) agreement;
    \item  Again, the stability guarantees of the degree of minimal agreement depend on the relative number of topic arguments that all agents agree on inferring, as well as on the stability of the topic.
\end{enumerate}

Let us first prove a preliminary: we want to characterize value-based AAS as ``ordinary'' AAS, in order to utilize the results obtained in the previous section.
\begin{restatable}{reprop}{prelimprop}\label{prelim-prop}
Let $VAAS = (VAF, T, \sigma)$, with $VAF = (AR, AT, V, val, {\cal P })$ and ${\cal P } = \langle P_0, ..., P_n \rangle$, and $VAAS' = (VAF', T', \sigma')$, with $VAF' = (AR', AT', V', val', {\cal P }')$ and $\langle P'_0, ..., P'_n \rangle$, be value-based AAS s.t.\ $VAAS \preceq_N VAAS'$. Given that $\sigma$ satisfies weak cautious monotony, it holds true that there exist two argumentation-based agreement scenarios $AAS$ and $AAS'$ s.t.\ $vdeg^{sim}_{min}(VAAS) = deg^{sim}_{min}(AAS)$ and $vdeg^{sim}_{min}(VAAS') = deg^{sim}_{min}(AAS')$, $AAS = (AF, T, SIG)$, $AAS' = (AF', T', SIG)$, $AAS \preceq_N AAS'$, $SIG = \langle \sigma_0, ..., \sigma_n \rangle$, and each $\sigma_i, 0 \leq i \leq n$, is an argumentation semantics that satisfies weak cautious monotony.
\end{restatable}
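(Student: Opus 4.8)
The plan is to realize the two value-based scenarios as ordinary agreement scenarios through a single, globally-defined ``value-reduced'' semantics for each agent, and then to transport weak cautious monotony from $\sigma$ down to these semantics. For each $i \in \{0,\ldots,n\}$ I would fix the value-reduction operation $r_i$ that maps an arbitrary argumentation framework $G = (AR_G, AT_G)$ to $r_i(G) = (AR_G, AT_G \setminus R_i)$, where $R_i = \{(\arga,\argb) \in AT_G \mid \arga,\argb \in AR', (val'(\argb), val'(\arga)) \in P'_i\}$; that is, $r_i$ deletes exactly those attacks among $AR'$-arguments that are overruled by the expanded preference $P'_i$ and leaves every other attack (in particular any attack touching an argument outside $AR'$) untouched. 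I then set $\sigma_i := \sigma \circ r_i$ and take $AAS = ((AR,AT), T, \langle \sigma_0,\ldots,\sigma_n\rangle)$ and $AAS' = ((AR',AT'), T', \langle \sigma_0,\ldots,\sigma_n\rangle)$ with the same sequence of semantics.

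The first block of routine steps checks that this choice reproduces the prescribed subjective frameworks on the two frameworks of interest. Using $val'$ restricted to $AR$ equals $val$ together with the normal-expansion preference condition (which, combined with $P_i \subseteq P'_i$, forces $P'_i \cap (V \times V) = P_i$), I would verify $r_i((AR,AT)) = AF_{P_i}$ and $r_i((AR',AT')) = AF'_{P'_i}$, hence $\sigma_i((AR,AT)) = \sigma(AF_{P_i})$ and $\sigma_i((AR',AT')) = \sigma(AF'_{P'_i})$ (recalling $\sigma = \sigma'$). This makes $AAS = vaas(VAAS)$ and $AAS' = vaas(VAAS')$, so the equalities $deg^{sim}_{min}(AAS) = vdeg^{sim}_{min}(VAAS)$ and $deg^{sim}_{min}(AAS') = vdeg^{sim}_{min}(VAAS')$ hold by definition of the value-based degrees. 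I would also dispatch $AAS \preceq_N AAS'$ quickly: $(AR,AT) \preceq_N (AR',AT')$, $T \subseteq T'$ and $(T' \setminus T) \cap AR = \emptyset$ are inherited verbatim from $VAAS \preceq_N VAAS'$, and the two sequences of semantics coincide by construction.

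The substantive step is proving that each $\sigma_i$ satisfies weak cautious monotony as a \emph{global} property, i.e., for \emph{every} normal expansion $G \preceq_N G'$ and not merely for the pair of interest. The key observation is that $r_i$ preserves normal expansions: because the deletion criterion $R_i$ depends only on the fixed endpoint values, $r_i(G)$ and $r_i(G')$ have attack sets contained in $AT_G$ and $AT_{G'}$ respectively, no attack between two old arguments is ever created, and so $r_i(G) \preceq_N r_i(G')$ holds (or degenerates to the equality $r_i(G) = r_i(G')$). The premise of weak cautious monotony then transfers downward: if no newly added argument attacks an extension $E$ in $G'$, then, since the attacks of $r_i(G')$ form a subset of those of $G'$, no newly added argument attacks $E$ in $r_i(G')$ either. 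Applying weak cautious monotony of $\sigma$ to the pair $(r_i(G), r_i(G'))$ yields $E' \in \sigma(r_i(G')) = \sigma_i(G')$ with $E \subseteq E'$, which is exactly the conclusion required for $\sigma_i$; the degenerate case $r_i(G) = r_i(G')$ is settled by taking $E' = E$.

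I expect the main obstacle to be precisely this global verification rather than the single pair $(AF, AF')$: one must be careful that $r_i$ is genuinely defined on all of $\mathcal{F}$ (including frameworks whose arguments lie outside $AR'$, where $r_i$ removes nothing), that the subset-of-attacks relation makes the ``no new attacker'' premise monotone in the correct direction, and that the strictness clause $G \neq G'$ implicit in $\preceq_N$ does not break when reduction collapses an expansion to an equality. Once these points are handled, the three claimed equalities and the normal-expansion relation all follow, completing the proof.
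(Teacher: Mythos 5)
Your proposal is correct, and its core mechanism is the same as the paper's: realize each value preference as an ordinary semantics acting on the subjective framework, and transfer weak cautious monotony from $\sigma$ to these induced semantics by observing that the subjective attack set is a subset of the full one, so the ``no new attacker'' premise propagates downward. The paper splits the claim into two auxiliary propositions (4.1.a establishes the existence of $AAS$, $AAS'$ with matching degrees and $AAS \preceq_N AAS'$ via the $vaas$ mapping; 4.1.b establishes that the $\sigma_i$ satisfy weak cautious monotony using exactly your subset-of-attacks observation), whereas you fold both into a single explicit construction. The genuine difference is one of rigor rather than strategy: the paper only pins down $\sigma_i$ by its values on the two frameworks $AF$ and $AF'$ (via $\sigma_i(AF) = \sigma(AF_{P_i})$ and $\sigma_i(AF') = \sigma(AF'_{P'_i})$) and then asserts that these under-specified semantics satisfy weak cautious monotony, which is a \emph{global} property quantifying over all normal expansions. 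Your definition $\sigma_i := \sigma \circ r_i$, with $r_i$ a totally defined value-reduction operator on all of ${\cal F}$, together with the check that $r_i$ preserves normal expansions (and your correct handling of the degenerate collapse case, which in fact cannot arise since a normal expansion with no new arguments admits no new attacks), closes this gap. Your verification that $P'_i \cap (V \times V) = P_i$ follows from the VAF normal-expansion condition, so that $r_i$ reproduces $AF_{P_i}$ and $AF'_{P'_i}$ on the two frameworks of interest, is also the right way to make the $vaas$ identification precise. In short: same idea, but your write-up supplies a construction the paper only claims exists.
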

Now, we can provide the main propositions.
\begin{restatable}{reprop}{vaaspropone}
Let $VAAS = (VAF, T, \sigma)$, with $VAF = (AR, AT, V, val, {\cal P })$ and $VAAS' = (VAF', T', \sigma')$, with $VAF' = (AR', AT', V', val', {\cal P }')$, be value-based AAS, s.t. $VAAS \preceq_N VAAS'$. Let ${\cal P } = \langle P_0, ..., P_n \rangle, {\cal P' } = \langle P'_0, ..., P'_n \rangle$, let $T = T'$, and let $h$ be the $h$-similarity function.
Let $\sigma$ be an argumentation semantics that satisfies weak cautious monotony.
If for $0 \leq i \leq n$, $\exists E \in \sigma(AF_i)$ s.t. $T \subseteq E$ and $cm(AF_{P_i}, AF'_{P'_i}, E, \sigma)$ holds true then $vdeg^{h}_{min}(AAS) = vdeg^{h}_{min}(AAS') = vdeg^{h}_{mean}(AAS) = vdeg^{h}_{mean}(AAS') = vdeg^{h}_{med}(AAS) = vdeg^{h}_{med}(AAS') = 1$.
\end{restatable}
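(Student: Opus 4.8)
The plan is to reduce everything to the \emph{subjective} frameworks and then to exhibit, on both sides of the expansion, a single conclusion set that makes every agent maximally satisfied. By the definition of the value-based degrees we have $vdeg^{h}_{\cdot}(VAAS) = deg^{h}_{\cdot}(vaas(VAAS))$ for each of the minimum, mean, and median notions, so all three degrees of $VAAS$ (and likewise of $VAAS'$) are computed solely from the subjective extension sets $\sigma(AF_{P_i})$ (resp.\ $\sigma(AF'_{P'_i})$) together with the topic. Proposition~\ref{prelim-prop} additionally certifies, using that $\sigma$ satisfies weak cautious monotony, that these extension sets are realised by genuine semantics $\sigma_0,\dots,\sigma_n$ and that the underlying subjective frameworks satisfy $AF_{P_i} \preceq_N AF'_{P'_i}$; I would record this once so that weak cautious monotony of $\sigma$ may be applied at the subjective level.

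The elementary fact I would isolate next is that, for the $h$-similarity function, $h(X,T,T)=1$ holds exactly when $T\subseteq X$: taking the conclusion set equal to $T$ forces $T\setminus T=\emptyset$, so the numerator of $h$ collapses to $|T\cap X|$, which equals $|T|$ iff $T\subseteq X$. Consequently, for any semantics $\rho$ and framework $G$, $\phi^{h}_{\rho}(G,T,T)=1$ iff $\rho$ infers from $G$ at least one extension containing the whole topic.

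For $VAAS$ the conclusion is then immediate: the hypothesis supplies, for each $i$, some $E_i\in\sigma(AF_{P_i})$ with $T\subseteq E_i$, so choosing the conclusion set $E=T\in 2^{T}$ gives $\phi^{h}_{\sigma_i}(AF,T,T)=1$ for every $i$. Since the minimum, mean, and median of a family consisting entirely of $1$'s all equal $1$, and since each degree is a maximum over $2^{T}$ of a quantity bounded by $1$, all three degrees of $VAAS$ equal $1$. For $VAAS'$ I would transport each topic-containing extension across the expansion: from $AF_{P_i}\preceq_N AF'_{P'_i}$ and the hypothesis $cm(AF_{P_i},AF'_{P'_i},E_i,\sigma)$ --- which asserts precisely that no newly added argument attacks $E_i$ in $AF'_{P'_i}$ --- weak cautious monotony of $\sigma$ (Definition~\ref{def:weak-cm}) yields $E'_i\in\sigma(AF'_{P'_i})$ with $E_i\subseteq E'_i$, hence $T\subseteq E_i\subseteq E'_i$. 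Because $T'=T$, choosing $E=T$ again gives $\phi^{h}_{\sigma_i}(AF',T,T)=1$ for every $i$, and the same aggregation argument forces all three degrees of $VAAS'$ to equal $1$.

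The step I expect to be the most delicate is the clean application of weak cautious monotony at the subjective level. One must verify that $AF_{P_i}\preceq_N AF'_{P'_i}$ is a bona fide normal expansion --- the value-preference clause of the VAF normal expansion guarantees that no attack between two old arguments is dropped or introduced, since new preferences involve only new values and $P_i\subseteq P'_i$ --- and that the arguments tracked by the subjective $cm$-condition are exactly those in $AR'\setminus AR$, so that the antecedent of Definition~\ref{def:weak-cm} is matched verbatim; much of this bookkeeping is already packaged by Proposition~\ref{prelim-prop}. I note that one cannot shortcut the $VAAS'$ case by invoking Proposition~\ref{proposition-3a} with $p=cm$, because that result requires the \emph{abstract} condition $cm(AF,AF',E_i,\sigma_i)$, whereas the hypothesis only provides the \emph{subjective} condition; since the subjective framework drops attacks present in $AF'$, the subjective $cm$ is strictly weaker than the abstract one, which is exactly why the direct argument through weak cautious monotony of $\sigma$ is the appropriate route.
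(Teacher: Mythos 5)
Your proof is correct, and its overall architecture coincides with the paper's: both reduce the value-based scenario to an ordinary AAS via the $vaas$ mapping, invoke Proposition~\ref{prelim-prop} to get the induced semantics $\sigma_i$ with $\sigma_i(AF)=\sigma(AF_{P_i})$ and the normal-expansion relationship, and then carry a topic-containing extension across the expansion using weak cautious monotony so that the conclusion set $E=T$ witnesses degree $1$ on both sides. Where you diverge is the final step: the paper closes by citing Proposition~\ref{proposition-3a} with $p=cm$, whereas you apply Definition~\ref{def:weak-cm} directly to the subjective frameworks $AF_{P_i}\preceq_N AF'_{P'_i}$. Your stated reason for refusing the shortcut is well taken and identifies a genuine looseness in the paper's own proof: Proposition~\ref{proposition-3a} requires the monotony condition $cm(AF,AF',E,\sigma_i)$ on the \emph{abstract} frameworks of the induced AAS, while the hypothesis of the proposition only supplies $cm(AF_{P_i},AF'_{P'_i},E,\sigma)$ on the \emph{subjective} frameworks, and since $AT'_{P'_i}\subseteq AT'$ the subjective condition is strictly weaker. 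Your direct route through weak cautious monotony of $\sigma$ at the subjective level is exactly what makes the argument sound, and your explicit verification that $h(E,T,T)=1$ iff $T\subseteq E$ supplies a small computation the paper leaves implicit. Nothing is missing; if anything, your version is the more careful of the two.
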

We show that given two value-based AAS $VAAS = (VAF, T, SIG)$ and $VAAS' = (VAF', T', SIG')$, such that $VAAS \preceq_N VAAS'$, $1 - \frac{\lfloor |T'| / 2 \rfloor + |T \cap E_\cap|}{|T'|}$ is the least upper bound of  $|vdeg^{h}_{min}(VAAS) - vdeg^{h}_{min}(VAAS')|$, if $\sigma$ satisfies the weak cautious monotony principle $RMP_{cm}$ and $cm(AF, AF', E, \sigma)$ holds true, where $E_\cap$ is the maximal intersection of the subjective extensions the agents infer from the initial value-based AAS. Again (as in Section~\ref{exp-aas}), we assume that $|T| > 0$ and that the degree of minimal agreement decreases with the expansion from $VAAS$ to $VAAS'$.
\begin{restatable}{reprop}{vaasproptwo}
Let $VAAS = (VAF, T, \sigma), VAF = (AR, AT, V, val, {\cal P })$ and $VAAS' = (VAF', T', \sigma'), VAF' = (AR', AT', V', val', {\cal P }')$ be argumentation-based agreement scenarios s.t.\ $VAAS \preceq_N VAAS'$. Let ${\cal P } = \langle P_0, ..., P_n \rangle, {\cal P' } = \langle P'_0, ..., P'_n \rangle$ and let $h$ be the $h$-similarity function. Let us assume that $vdeg^{h}_{min}(VAAS) \geq vdeg^{h}_{min}(VAAS')$.
Let $\sigma$ be an argumentation semantics that satisfies weak cautious monotony.
If for $0 \leq i \leq n$, $cm(AF_{P_i}, AF'_{P'_i}, E, \sigma_i)$ holds true then the least upper bound of $\Delta_{vdeg^{h}_{min}}(VAAS, VAAS')$ is $1 - \frac{\lfloor |T'| / 2 \rfloor + |T \cap E_\cap|}{|T'|}$, where $E_\cap = \{E_0 \cap ... \cap E_n | E_0 \in \sigma_0(AF), ..., E_n \in \sigma_n(AF), \nexists  E'_0 \in \sigma_0(AF), ..., E'_n \in \sigma_n(AF) \text{ s.t. } |E_0 \cap ... \cap E_n| < |E'_0 \cap ... \cap E'_n|\})$.
\end{restatable}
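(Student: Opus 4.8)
The plan is to reduce this value-based statement to the abstract bound already established in Proposition~\ref{proposition-4a} by way of the characterization Proposition~\ref{prelim-prop}, so that no combinatorial counting has to be redone. First I would set $AAS := vaas(VAAS)$ and $AAS' := vaas(VAAS')$, so that by the definition of the $vaas$ mapping we have $AAS = (AF, T, SIG)$, $AAS' = (AF', T', SIG)$ with $SIG = \langle \sigma_0, \dots, \sigma_n \rangle$ and $\sigma_i(AF) = \sigma(AF_{P_i})$, $\sigma_i(AF') = \sigma(AF'_{P'_i})$ for each $i$. Using that $\sigma$ satisfies weak cautious monotony, Proposition~\ref{prelim-prop} then certifies that $AAS \preceq_N AAS'$, that each $\sigma_i$ satisfies weak cautious monotony, and that $vdeg^{h}_{min}(VAAS) = deg^{h}_{min}(AAS)$ and $vdeg^{h}_{min}(VAAS') = deg^{h}_{min}(AAS')$. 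Consequently $\Delta_{vdeg^{h}_{min}}(VAAS, VAAS') = \Delta_{deg^{h}_{min}}(AAS, AAS')$, and it suffices to bound the right-hand side.

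Next I would verify that the hypotheses of Proposition~\ref{proposition-4a} hold for $AAS$ and $AAS'$. By Proposition~\ref{prop-wcm}, weak cautious monotony coincides with the relaxed monotony principle $RMP_{cm}$, so each $\sigma_i$ satisfies $RMP_{cm}$. The remaining hypothesis of Proposition~\ref{proposition-4a} is that for each $\sigma_i$ and every $E \in \sigma_i(AF)$ the monotony condition $cm(AF, AF', E, \sigma_i)$ holds. Here is the one point that needs care: I must argue that the assumed condition $cm(AF_{P_i}, AF'_{P'_i}, E, \sigma_i)$ on the subjective frameworks transfers to $cm(AF, AF', E, \sigma_i)$ on the mapped frameworks. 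This follows because $cm$ only tests whether some argument of $AR' \setminus AR$ attacks $E$, and the VAF normal expansion $VAF \preceq_N VAF'$ (with values fixed on old arguments and old value preferences preserved) adds exactly the same new arguments and incident attacks in each subjective framework as in the objective one, so the two evaluations of $cm$ agree. With all hypotheses met, Proposition~\ref{proposition-4a} yields that $1 - \frac{\lfloor |T'| / 2 \rfloor + |T \cap E_\cap|}{|T'|}$ is a tight upper bound of $\Delta_{deg^{h}_{min}}(AAS, AAS')$, where its $E_\cap$ — the maximal intersection of the $\sigma_i(AF) = \sigma(AF_{P_i})$ — is precisely the maximal intersection of the subjective extensions named in the present statement.

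Transporting the inequality back then gives the claimed upper bound on $\Delta_{vdeg^{h}_{min}}(VAAS, VAAS')$, the underlying lower bound $\frac{\lfloor |T'| / 2 \rfloor}{|T'|}$ of Lemma~\ref{lemma:min-min-deg} and Lemma~\ref{lemma:min-min-deg-change} being inherited through the same reduction. I expect the \emph{tightness} (least-upper-bound) direction to be the main obstacle: Proposition~\ref{proposition-4a} supplies an extremal \emph{abstract} scenario, and I must realize an analogous extremal \emph{value-based} scenario attaining the bound. This requires constructing a VAF that respects the irreflexivity of $AT$ (no self-attacks) and the transitivity/asymmetry of each $P_i$, yet whose subjective frameworks reproduce the extensions used in the abstract witness. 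Since the abstract tight example may exploit attack patterns (such as self-attacks) that a VAF cannot contain directly, the workaround is to emulate the required rejection of arguments through value preferences together with auxiliary non-topic arguments, and then to check that the resulting subjective preferred extensions still realize both the extremal $E_\cap$ and the worst-case degree on the expanded topic $T'$, so that the bound is met exactly.
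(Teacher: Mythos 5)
Your proposal follows essentially the same route as the paper: map both scenarios through $vaas$, invoke Proposition~\ref{prelim-prop} to obtain abstract agreement scenarios with $AAS \preceq_N AAS'$ whose semantics satisfy weak cautious monotony and whose degrees of minimal agreement coincide with the value-based ones, and then conclude by Proposition~\ref{proposition-4a}. One step of your justification is stated backwards, however: you claim the evaluations of $cm$ on the objective frameworks and on the subjective frameworks agree because the expansion ``adds exactly the same new arguments and incident attacks'' in both. That is not true in the direction you need --- a new argument may attack $E$ in $AF'$ while that attack is deleted in $AF'_{P'_i}$ by a value preference, so $cm(AF_{P_i}, AF'_{P'_i}, E, \sigma)$ can hold while $cm(AF, AF', E, \sigma_i)$ fails; only the converse implication holds (this is what step 5 of the paper's proof of Proposition~\ref{prelim-prop-b} establishes). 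The gap is harmless because the conclusion actually needed for the counting argument --- that every $E \in \sigma_i(AF) = \sigma(AF_{P_i})$ persists in some $E' \in \sigma(AF'_{P'_i}) = \sigma_i(AF')$ --- follows directly from the hypothesis $cm(AF_{P_i}, AF'_{P'_i}, E, \sigma)$ together with weak cautious monotony of $\sigma$ applied to the subjective normal expansion $AF_{P_i} \preceq_N AF'_{P'_i}$, without ever transferring $cm$ to the objective frameworks; the paper's own proof is no more explicit on this point. Your additional concern about realizing the tightness witness inside a VAF (irreflexive attacks, transitive and asymmetric preferences) is legitimate and is not addressed by the paper at all.
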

Let us highlight that in contrast to the more general propositions in Section~\ref{exp-aas}, we can provide the above guarantees merely in case of cautious monotony and not for any relaxed monotony principle.
The reason for this is that we must rely on the specifics of the $p$-function of cautious monotony that intuitively relaxes monotony only in face of new direct attackers: in value-based agreement scenarios, such attackers cannot exist in subjective argumentation frameworks if they do not also exist in the corresponding (abstract) agreement scenario representation.

\section{Implementation and Experiments}
\label{sec:implementation}
As an initial step towards an empirical perspective on our formal approach to degrees of agreement in argumentation dialogues, we provide a software implementation that
supports the specification of argumentation-based agreement scenarios (value-based or not) and the computation of degrees of satisfaction and agreement, as well as of the impact values have on degrees of agreement.
The implementation is an extension of the \emph{Diarg} argumentation-based dialogue reasoner~\cite{kampik2020diarg}, which in turn is based on \emph{Tweety}, a well-known library for argumentation and defeasible reasoning~\cite{10.5555/3031929.3031992}.
Like Tweety, our implementation is provided in Java, a mainstream high-level programming language.
A tutorial on how to work with our implementation is provided at \url{http://s.cs.umu.se/mhfrcp}, alongside the source code and additional documentation and tests.

Using our implementation, we can conduct initial experiments based on synthetically generated argumentation frameworks to obtain some empirically informed intuitions.
Below, we do exactly this, focusing on value-based AAS and the following two questions:
\begin{enumerate}
    \item Given a value-based AAS and a normal expansion of it, how large is the delta between initial and final degrees of (minimal, median, and median) agreement,  and how is this delta affected by the size of the expansion (in terms of number of new arguments)?
    \item Given a value-based AAS, how large is the impact a value has on the degrees of (minimal, median, and mean) agreement, and how does the size of the argumentation framework (in terms of number of arguments) affect the impact?
\end{enumerate}
We focus on value-based AAS and do not cover ``ordinary'' AAS because the former explicitly model subjective differences in agent preferences that affect the extensions inferred by the agents, whereas in the latter case, the degrees of agreement are affected solely by differences in how semantics treat topological properties (arguably: often nuances) of argumentation frameworks.
Hence, we consider an empirical study of value-based AAS more interesting for gauging future application potential.
In contrast, a study of AAS could, if conducted thoroughly, produce empirical insights regarding the nature of differences between argumentation semantics, but is considered out-of-scope in the context of our paper.

In order to shed light on Question 1, we first generate an initial value-based AAS and determine the degrees of agreement (Step 1). Then, we expand the value-based AAS and again determine the degrees of agreement and their delta to the previously computed degrees (Step 2), as described in more detail below:
\begin{enumerate}
    \item Each argument has up to three attack targets, and each of the three attacks (to any random argument) is generated with a probability of 50\%. Self-attacks are excluded. Each argument is mapped to a value (we can then say that we have \emph{single-argument values} and hence our VAFs are, intuitively, \emph{multi-agent preference-based argumentation frameworks}). A topic is generated that includes any of the arguments with 50\% probability. We then generate five value preferences for each of the three agents, in a pseudo-random manner, considering the following constraints:
    \begin{enumerate*}[label=\roman*)]\label{en:vp-constraints}
    \item each value preference runs contrary to the direction of an attack between two arguments that are mapped to the corresponding values, in order to ensure the value preference can have an actual impact on the inferences that are drawn; 
    \item an agent's value preferences must be transitive.
    \end{enumerate*}
     Then, we instantiate the initial value-based AAS and determine its degrees of (minimal, mean, and median) agreement.
    \item We then expand the (value-based) argumentation framework by adding one or several arguments (see below) and attacks from these arguments to any other arguments, but not vice versa\footnote{I.e., we have a \emph{strong expansion}~\cite{baumann2010expanding}.}. We also generate a new value for each of the new arguments and add new value preferences for each agent, approximately one for each new argument, again in a pseudo-random manner considering the constraints above (preferences running contrary to attacks and transitive preferences per agent).
    Then, we generate a new value-based AAS with the expanded value-based argumentation framework and again determine the new degrees of agreement, which we then use to compute the (absolute) delta between then initial and new degrees (minimal, median, and mean). When generating the new value-based AAS, we run experiments for two configurations:
    \begin{enumerate*}[label=\roman*)]
        \item in one configuration, we keep the topic fixed, i.e., no new arguments are added to the topic;
        \item in the other configuration, we expand the topic by adding each ``new'' argument to the topic with a probability of 50\%.
    \end{enumerate*}

\end{enumerate}
We follow this procedure for an initial argumentation framework with between one and ten arguments (determined at random) and an expansion that contains between one and 15 additional arguments (repeatedly, increasing by increments of one). For each expansion size, we create a new initial argumentation framework and expansion 30 times and we average the computed changes to the degrees of agreement (separated by minimal, median, and mean) to then plot the obtained values per expansion size for fixed and expanding topic sizes (Figures~\ref{fig:agreement-degrees-wo-topic-exp} and~\ref{fig:agreement-degrees-w-topic-exp})\footnote{Changes to the degrees of agreement are always determined relative to an initial argumentation framework and not relative to the next smaller expansion.}.
In addition, we provide plots (in Figures~\ref{fig:agreement-degrees-wo-topic-exp-n} and~\ref{fig:agreement-degrees-w-topic-exp-n}) showing the average change in a \emph{normalized} manner so that it is bounded by the maximum distance to the bounds of $0$ and $1$ (and not by 1). For example, if the degree of agreement is 0.5 to start with, it can (without further assumptions about the properties of the measure) maximally change by 0.5 (to either 0 or 1), and we have added plots that reflect this fact, thus painting a more cautious picture.

The most obvious observations we can make are that
\begin{enumerate*}[label=\roman*)]
    \item degrees of median agreement are the most stable and degrees of minimal agreement the least, as informally claimed in Section~\ref{sec:argumentation};
    \item expanding the topic leads to greater instability---in particular when considering the change in degrees of minimal agreement, normalized relative to what would be the maximal change possible, with expanding topics, we see the potential for rapid changes.
\end{enumerate*}
For the latter observations, note the difference in scale between the y-axes of Figures~\ref{fig:agreement-degrees-wo-topic-exp} and~\ref{fig:agreement-degrees-w-topic-exp}, as well as of Figures~\ref{fig:agreement-degrees-wo-topic-exp-n} and~\ref{fig:agreement-degrees-w-topic-exp-n}.

We cannot observe a clear increase in the delta measuring the change in degrees of minimal and mean agreement with an increasing number of arguments in the expansion; considering the initial spikes in Figures~\ref{fig:agreement-degrees-wo-topic-exp} and~\ref{fig:agreement-degrees-wo-topic-exp-n} (representing the same experiments at different scales), we can speculate (without certainty) that the degree of agreement delta is not closely associated with expansion size.
To the contrary, given a fixed topic, an increasing expansion size may even imply that the topic set of the initial (value-based) agreement scenario is \emph{less likely} subjected to substantial changes to the degrees of agreement.
For expanding topics we see analogous spikes for larger expansions in Figures~\ref{fig:agreement-degrees-w-topic-exp} and~\ref{fig:agreement-degrees-w-topic-exp-n}, which is a further indication that the spikes in Figures~\ref{fig:agreement-degrees-wo-topic-exp} and~\ref{fig:agreement-degrees-wo-topic-exp-n} should be interpreted with caution. 
All in all, the plots indicate that a ``disagree and commit'' approach may be viable, as in particular the changes to degrees of median agreement are relatively low given non-expanding topic sizes and our synthetically generated data.
However, the degree of minimal agreement may be impacted substantially, in particular for expanding topic sizes. This is most pronounced when considering actual versus maximal change in the degree of minimal agreement; i.e., it cannot be assumed that the satisfaction of all agents remains generally stable.

\begin{figure}[ht]
\centering
\subfloat[Fixed topic.]{
\label{fig:agreement-degrees-wo-topic-exp}
\includegraphics[width=0.45\columnwidth]{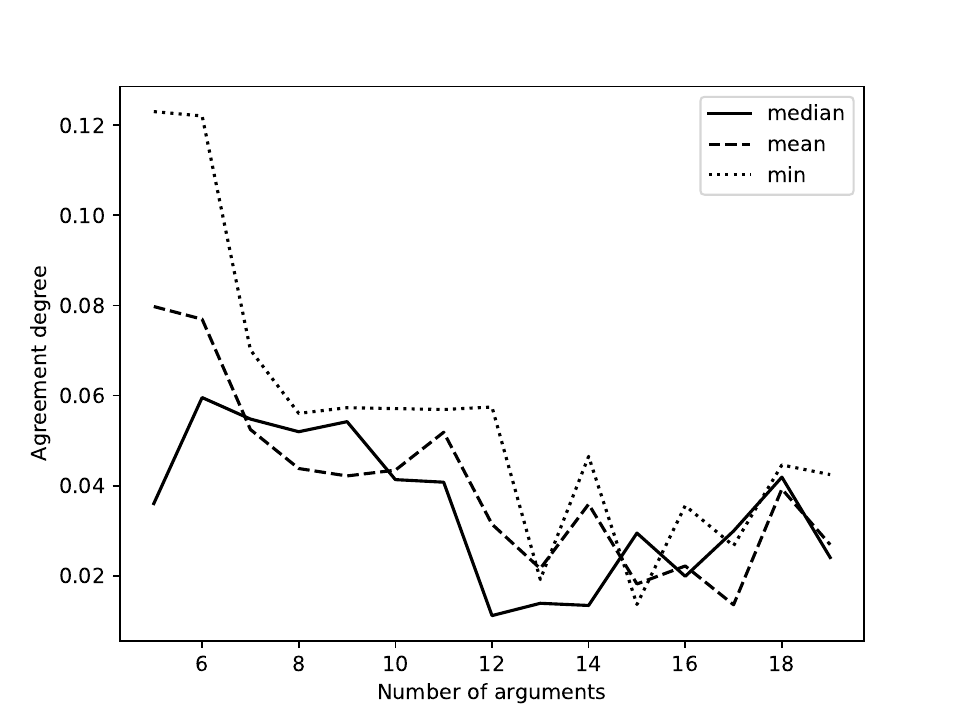}
}
\subfloat[Expanding topic.]{
\label{fig:agreement-degrees-w-topic-exp}
\includegraphics[width=0.45\columnwidth]{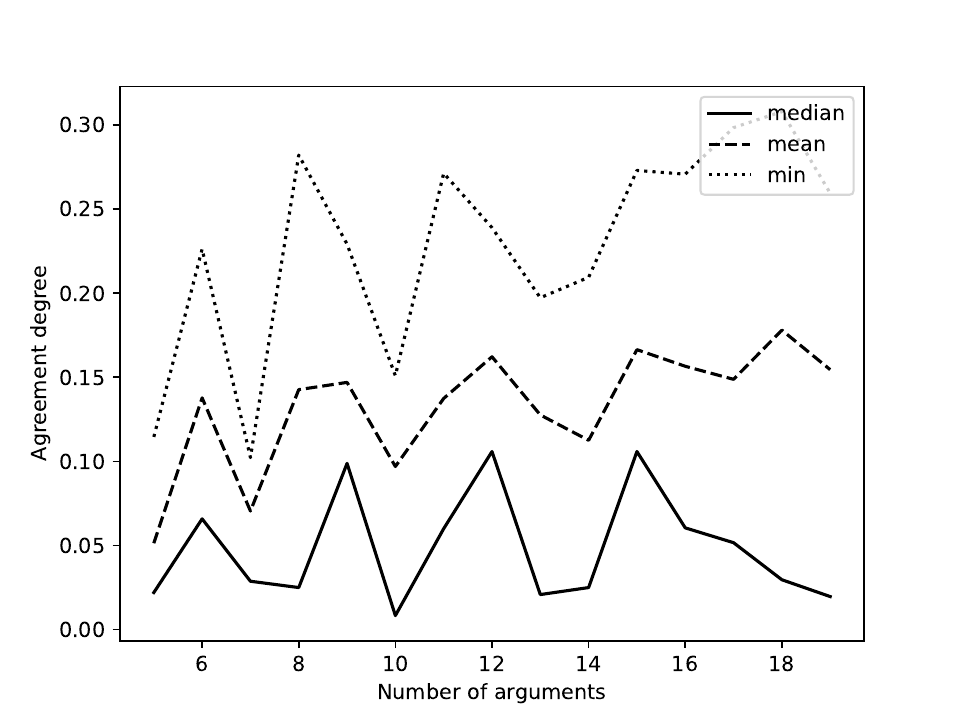}}

\subfloat[Fixed topic, normalized degrees of agreement.]{
\label{fig:agreement-degrees-wo-topic-exp-n}
\includegraphics[width=0.45\columnwidth]{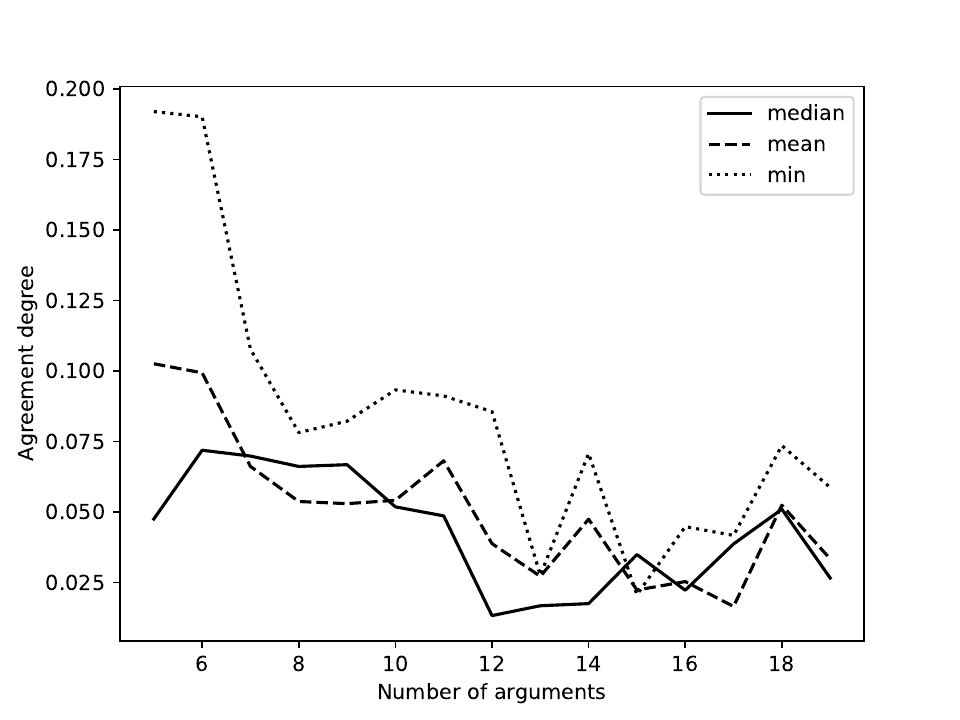}
}
\subfloat[Expanding topic, normalized degrees of agreement.]{
\label{fig:agreement-degrees-w-topic-exp-n}
\includegraphics[width=0.45\columnwidth]{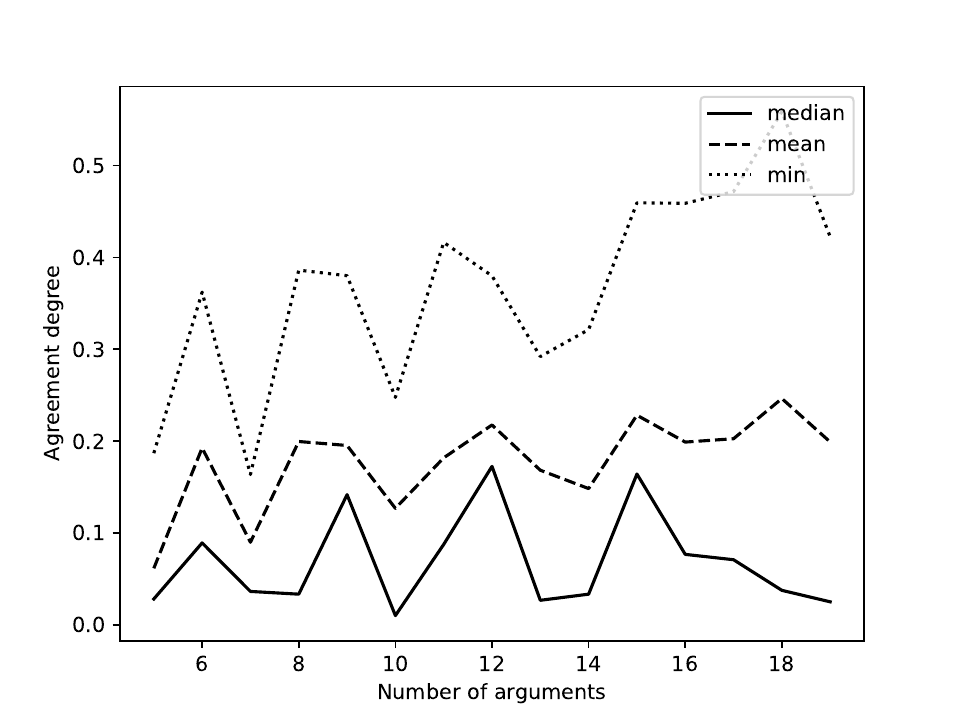}}
\caption{Delta between initial and final degrees of agreement after increasingly large (normal) expansions.}
\label{fig:agreement-degrees}
\end{figure}

In order to shed light on Question 2, we generate value-based AAS with three agents and between 5 and 20 arguments, following the same approach as the one we use in the first step of the previous experiment (without determining degrees of agreement in this case).
However, we diverge from the previous initial generation step regarding the generation of the set of topic arguments.
More specifically, in the \emph{fixed topic} setting, a topic is generated that includes any of the first five arguments with 50\% probability, and, given the configuration in which we have a topic size proportional to the number of arguments in our argumentation framework, any argument is included in the topic with 50\% probability.

For each argumentation framework size in $[5, 20]$, we generate 30 argumentation frameworks in the way described above and randomly select a value that we consider ``relevant'', i.e., that is mapped to an argument that is the target of an attack.
We then determine the absolute impact of this value on the degree of minimal, median, and mean agreement and average the sum (ignoring the sign of the impact) over the 20 argumentation frameworks for a given size.
The resulting plots for fixed and expanding topic sizes are shown in Figure~\ref{fig:value-impact}, again plotting both the absolute change of the degrees of agreement (Figures~\ref{fig:value-impact-wo-topic-exp} and~\ref{fig:value-impact-w-topic-exp}), as well as the normalized change relative to the maximal change that can possibly occur (Figures~\ref{fig:value-impact-wo-topic-exp-n} and~\ref{fig:value-impact-w-topic-exp-n}).
What we can see is that the impact of a single-argument value tends to be lower with a larger number of arguments in an argumentation framework, \emph{given the topic size is proportional to the number of arguments} (Figures~\ref{fig:value-impact-w-topic-exp} and~\ref{fig:value-impact-w-topic-exp-n}). This is less clear when the topic size is independent of the number of arguments in the argumentation framework (Figures~\ref{fig:value-impact-wo-topic-exp} and~\ref{fig:value-impact-wo-topic-exp-n}).
Generally, we can see that the impact of (single-argument) values on degrees of agreement is moderate: even in the case of degrees of minimal agreement, the impact is just a small fraction (typically not more than 10\% and, in the averaged experiments, never more than 20\%) of the maximal impact a value could potentially have (Figures~\ref{fig:value-impact-wo-topic-exp-n} and~\ref{fig:value-impact-w-topic-exp-n}).

\begin{figure}[ht]
\centering
\subfloat[Fixed topic.]{
\label{fig:value-impact-wo-topic-exp}
\includegraphics[width=0.45\columnwidth]{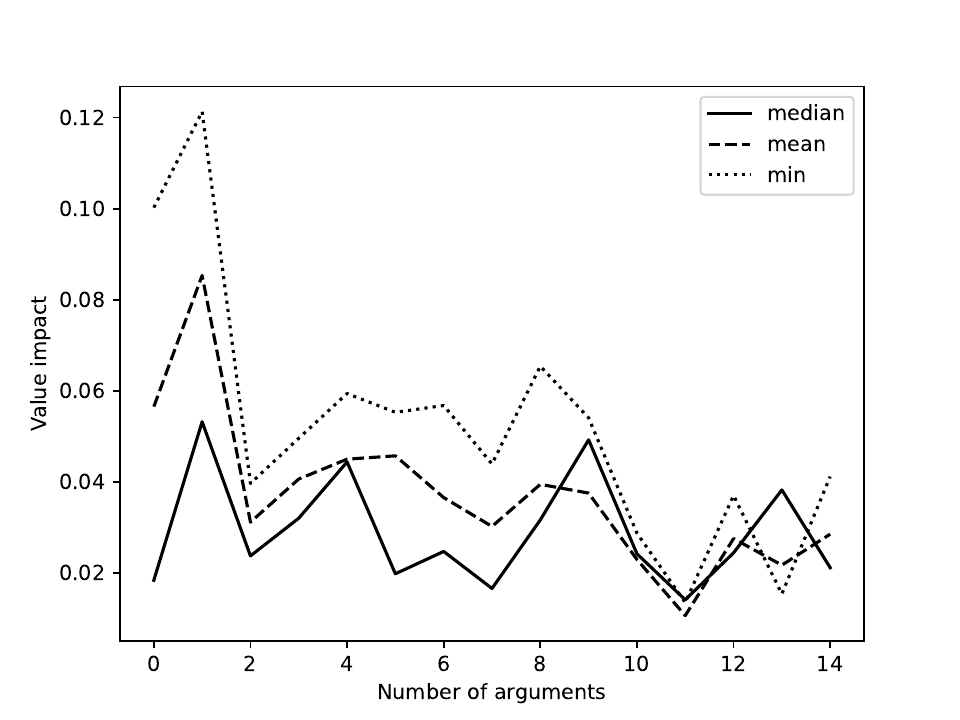}
}
\subfloat[Expanding topic.]{
\label{fig:value-impact-w-topic-exp}
\includegraphics[width=0.45\columnwidth]{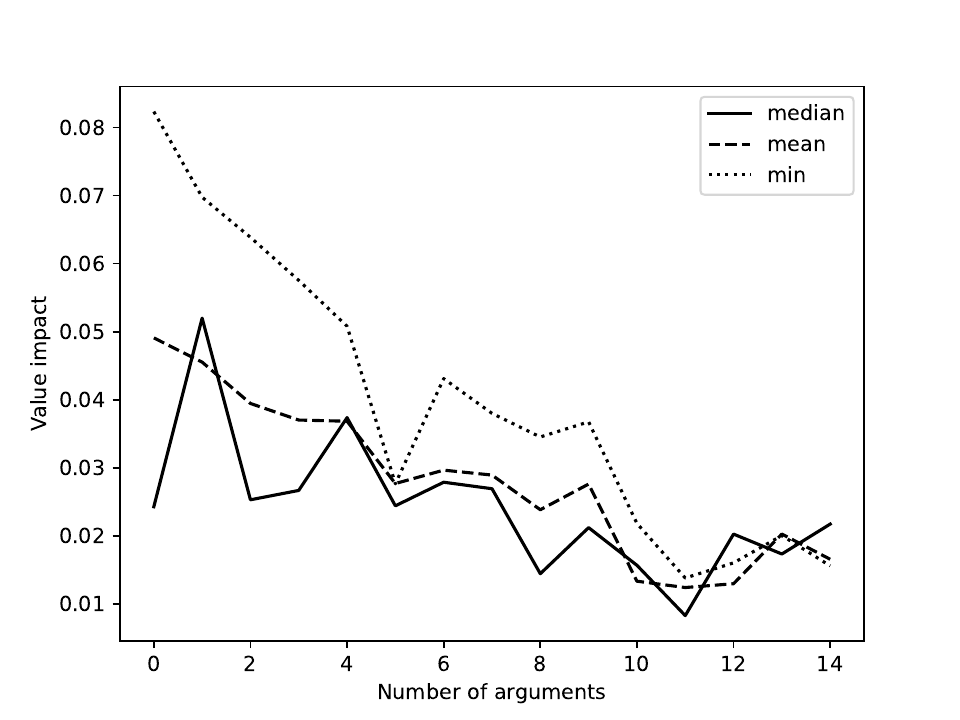}} \\
\subfloat[Fixed topic, normalized degrees of agreement.]{
\label{fig:value-impact-wo-topic-exp-n}
\includegraphics[width=0.45\columnwidth]{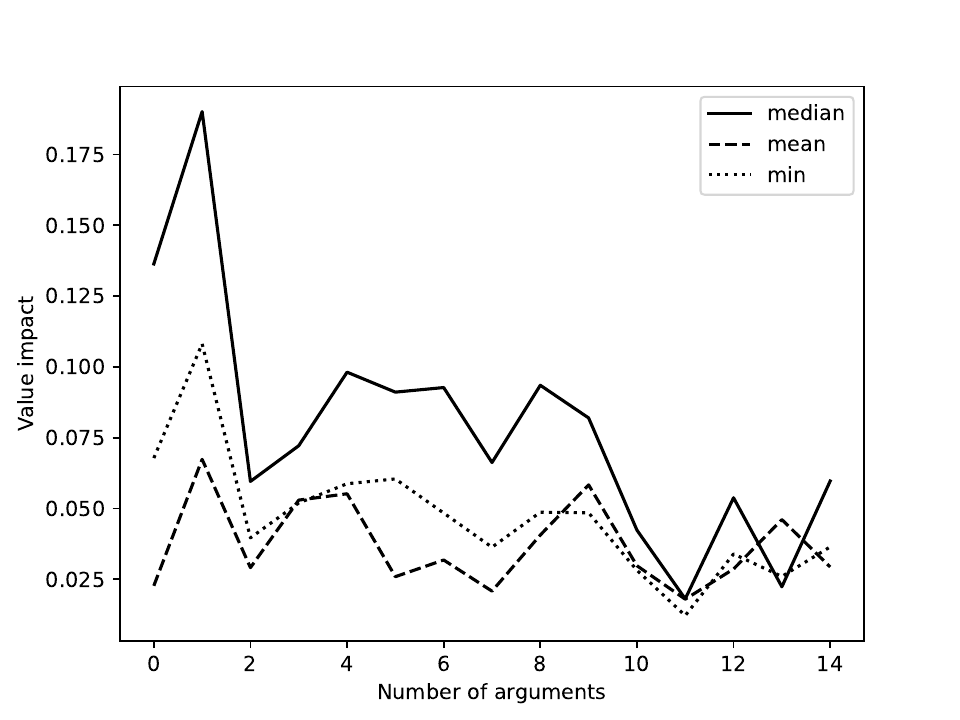}
}
\subfloat[Expanding topic, normalized degrees of agreement.]{
\label{fig:value-impact-w-topic-exp-n}
\includegraphics[width=0.45\columnwidth]{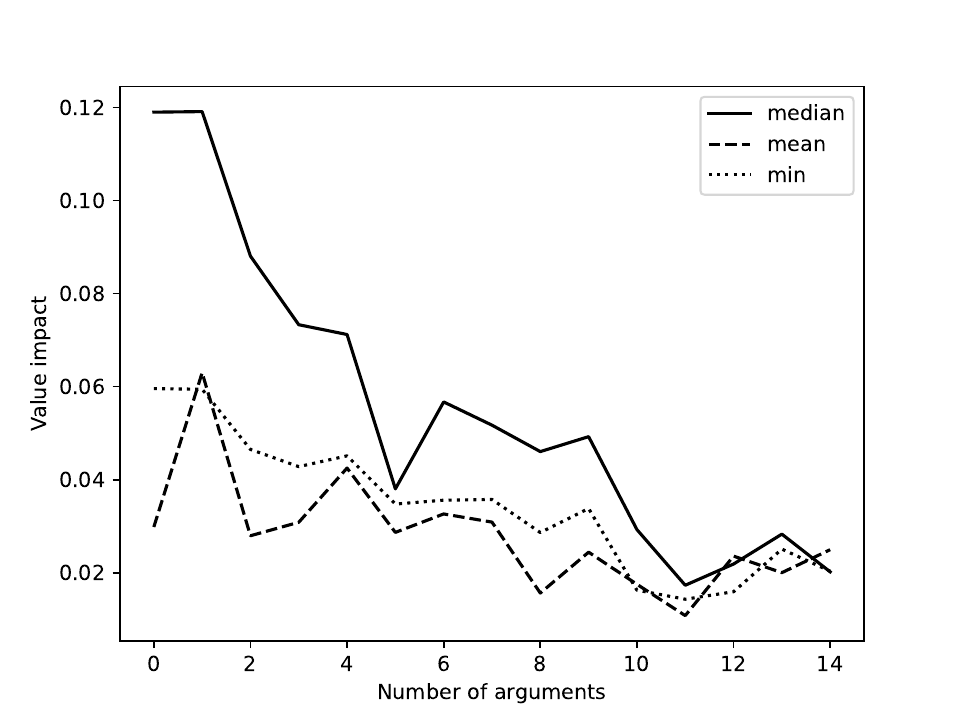}}
\caption{Value impact of random ``relevant'' (i.e., attack-overriding) values in argumentation frameworks of different sizes.}
\label{fig:value-impact}
\end{figure} 
The code generating the synthetic data (albeit in an non-deterministic manner) is available at \url{http://s.cs.umu.se/rcapc1}.
Let us highlight that the experiments presented above are merely initial steps towards providing empirical perspectives on the formal results at the core of the paper; e.g., one of the many design choices affecting the evaluation in a non-trivial manner is the focus on 1-to-1 mappings between arguments and values.
More experiments on synthetic and ideally real-world data are necessary to provide a holistic understanding, which we strongly encourage for future research.

\section{Related Research}
\label{related}
The research presented in this paper takes the idea of degrees of agreement, first developed for a specific variant of structured argumentation~\cite{nieves2019approximating}, and lifts them to the abstract level; the two approaches are not directly comparable, as they assume fundamentally different underlying formal models. Let us argue that the approach taken in this paper is appealing because it takes abstract argumentation as a starting point, which is a very simple model of reasoning in face of conflicts that is well-studied and widely understood within the broader symbolic AI community.

The issue our theoretical framework and its software implementation tackle is of a different nature than the problems investigated in works on merging knowledge bases in general~\cite{10.1007/978-3-030-00461-3_15} and argumentation frameworks in particular~\cite{10.5555/3032027.3032032} in that we focus on the result of an inference process and not on the input knowledge base.
In contrast to works on argumentation and judgment aggregation (see~\cite{bodanza-survey} for a survey), our work does not primarily address the problem of determining a joint judgment, but rather of assessing to what extent diverging judgments are aligned and may stay aligned in a dynamic environment; thereby, we provide a novel bridge between multi-agent argumentation and \emph{argumentation dynamics} (see~\cite{doutre-argument} for a survey).
In contrast to other works on argumentation dynamics such as~\cite{Cayrol2010,Bisquert2011} that formalize change operations (for adding and removing an argument, respectively), we instead model change as the normal expansion of an argumentation framework.
This means we focus only on the addition of new arguments. Let us claim that this is a well-motivated constraint: after all, the removal of arguments can be modeled by introducing dummy \emph{annihilator arguments} (cf.~\cite{10.1093/logcom/exu007}).
Then, the change of an argument's outgoing attacks can be modeled by its removal and replacement by a new argument.

Also, our work relates to the line of research of enforcement in formal argumentation as surveyed in~\cite{baumann2021enforcement}.
Enforcement studies typically focus on the (minimal) changes that are necessary to ensure the acceptance (or rejection) of arguments (notably, \cite{baumann2012does,ulbricht2019if}\footnote{Note that in~\cite{ulbricht2019if}, the objective is to enforce the acceptance of a non-empty set of arguments, i.e., enforcement is moved to the meta-level.}).
In contrast, we focus on the enforcement of argumentation principles---in particular of systematic relaxations of monotony of entailment, thereby moving enforcement to the level of argumentation semantics.
Monotony and systematic relaxations thereof have been extensively studied in the context of formal argumentation, e.g., in~\cite{10.1007/978-3-319-28460-6_6,10.1007/978-3-319-28460-6_13,baumann2010expanding}.

Finally, let us highlight that some of the key notions introduced in this paper---namely degrees of satisfaction and agreement---can be applied to any choice scenario where agreements between agents that may have more than one preferred choice option need to be measured; i.e., these notions are somewhat agnostic to formal argumentation, although the idea of \emph{extensions} in the sense of several possible and equally valid inferences plays a central role. Here, future work may further position these notions in the context of social choice research~\cite{10.5555/3180776}.


\section{Conclusion and Future Research}
\label{questions}
This paper provides a formal framework for degrees of agreements in abstract and value-based argumentation dialogues, as well as guarantees with respect to bounds for changes in degrees of agreement, given principle-based constraints. 
A possible next research step is to assess the computational complexity of determining degrees of satisfaction and agreements, as well as of enforcing relaxed monotony principles; determining degrees of agreement can be computationally costly, as it requires a search through the powerset of an argumentation framework's arguments. 
Besides this, a range of research directions can be considered promising to further develop the introduced approach. In the theoretical analysis, for the sake of conciseness, the proofs of bounds of changes in degrees of agreement in argumentation-based agreement scenarios are limited to the \emph{degree of minimal agreement}. An analysis of maximal changes in the degrees of mean and median may be relevant, although intuitively, the \emph{minimal} bound is of greatest interest.
More fundamentally, the research presented in this paper could be extended to align with a novel approach to abstract argumentation in which the semantics order sets of arguments according to their \emph{plausibility}~\cite{skiba2020towards}. These semantics
naturally fit the argumentation-based agreement scenarios this paper introduces, as new measures of satisfaction can be straightforwardly derived from the plausibility orders.
A key limitation of our research is the assumption that the agents do not act strategically, i.e., we do not model an agent's preferences over choice options given the preferences of other agents. To achieve this, future work can build upon results in game theory on norm-based equilibria~\cite{RICHTER2020105042}. There are additional lines of work our approach can be integrated with, e.g., a preference-based argumentation method for multi-criteria decision-making~\cite{AMGOUD2009413}, which presents an application of formal argumentation to a similar problem, but does not cover degrees of agreement and consistency/relaxed monotony principles, and argumentation context frameworks~\cite{10.1007/978-3-642-04238-6_7}, which are, roughly speaking, further extending value-based argumentation frameworks to support distributed argumentation.
Crucially, we want to emphasize the need to expand on the initial empirical perspectives that we provide in Section~\ref{sec:implementation}, ideally utilizing real-world data.
Finally, we consider the assessment of the impact of values on collaborative inferences and decisions a subject that invites further study in real-world social contexts.

\backmatter

\bmhead{Acknowledgments}
We thank the anonymous reviewers for very detailed feedback that has helped us to substantially improve the presentation of the paper.
This work was partially supported by the Wallenberg AI, Autonomous Systems and Software Program (WASP) funded by the Knut and Alice Wallenberg Foundation.

\begin{appendices}
\section*{Appendix: Proofs}
The appendix re-states all propositions that are presented in the body of the paper and provides their proofs.
%
\wcmprop*
%
\begin{proof}
\phantom{e}
\begin{enumerate}
    \item By definition, weak cautious monotony is satisfied iff the following statement holds true for every two argumentation frameworks $AF = (AR, AT), AF' = (AR', AT'), AF \preceq_N AF'$.
\begin{align*}
&{} \forall E \in \sigma(AF), \\
&{} \text{if } \{(\arga,\argb) \mid (\arga,\argb) \in AT', a \in AR' \setminus AR, b \in E \}= \{\} \\
&{} \text{then } \exists E' \in \sigma(AF') \text{ s.t. } E \subseteq E'
\end{align*}
\item According to the definition of a relaxed monotony principle, $RMP_{cm}$ is satisfied by an argumentation semantics $\sigma$ iff the following statement holds true for every two argumentation frameworks $AF = (AR, AT), AF' = (AR', AT'), AF \preceq_N AF'$:
\begin{align*}
    &{} \forall E \in \sigma(AF), \\
    &{} {if } \: cm(AF, AF', E, \sigma) \text{ holds true} \\ &{} 
    \text{ then } \exists E' \in \sigma(AF'),  \text{ s.t. } E \subseteq E'
\end{align*}
\item From \emph{2.} it follows that by definition of $RMP_{cm}$, $\sigma$ violates $RMP_{cm}$ iff there exists two argumentation frameworks $AF = (AR, AT), AF' = (AR', AT')$, such that $ AF \preceq_N AF'$ and the following statement does not hold true:
\begin{align*}
    &{} \forall E \in \sigma(AF), \\
    &{} {if }  \{(\arga, \argb) |(\arga, \argb) \in AT', \arga \in AR' \setminus AR, \argb \in E \} = \{\}
    \\ &{}  \text{ then } \exists E' \in \sigma(AF'),  \text{ s.t. } E \subseteq E'
\end{align*}
Clearly, this statement (following the \emph{iff}) must be false if \emph{1.} holds, and if the statement holds, {1.} must be false. Hence, $\sigma$ violates $RMP_{cm}$ iff weak cautious monotony is violated, which proves the proposition.
\end{enumerate}
\end{proof}
%
\rmpprop*
%
\begin{proof}
\phantom{e}
\begin{enumerate}
    \item By definition of $\sigma_{na}$, for every conflict-free set $S \subseteq AR'$ it holds true that $\exists E' \in \sigma_{na}(AF')$, s.t. $S \subseteq E'$.
    \item By definition of $\sigma_{na}$, $\forall E \in \sigma_{na}(AF)$, it holds true that $E$ is a conflict-free set in $AF$.
    \item Because $AF \preceq_N AF'$, it follows from 2. that $\forall E \in \sigma_{na}(AF)$, it holds true that $E$ is a conflict-free set in $AF'$.
    \item Hence, it follows from 1. that $\forall E \in \sigma_{na}(AF)$, $\exists E' \in \sigma_{na}(AF')$, s.t.  $E \subseteq E'$. We have proven the proposition.
\end{enumerate}
\end{proof}
%

\propaasrelaxedonea*
%
\begin{proof}
Because for $\sigma_i$, $0 \leq i \leq n$, it holds that $\exists E \in \sigma_i(AF)$ s.t. $T \subseteq E$, we know, given the degrees of agreement definitions (Definition~\ref{degs-aas}), that $\deg^{h}_{min}(AAS) = \deg^{h}_{mean}(AAS) = \deg^{h}_{med}(AAS) = 1$ (our agents can fully agree on $T$, roughly speaking).
Now, for our proof we can assume that for each $\sigma_i$, $\forall E \in \sigma_i(AF)$, $p(AF, AF', E, \sigma_i)$ holds true.
Then, because our semantics $\sigma_i$ satisfy the relaxed monotony principle $RMP_p$, it must hold true that $\forall E \in \sigma_i(AF)$, there exists $E' \in \sigma_i(AF')$ s.t. $E \subseteq E'$ and hence that for each $\sigma_i$ it must hold that $\exists E' \in \sigma_i(AF')$ s.t. $T \subseteq E'$.
From this it follows, again given the degrees of agreement definitions (Definition~\ref{degs-aas}), that $\deg^{h}_{min}(AAS') = \deg^{h}_{mean}(AAS') = \deg^{h}_{med}(AAS') = 1$ (our agents can still fully agree on $T$), which completes the proof.
\end{proof}
%

\minmindeg*
%
\begin{proof}
    Clearly, $\deg^{h}_{min}(AAS)$ is minimal iff $\exists \sigma, \sigma' \in SIG$ s.t. $\forall E \in \sigma(AF)$ it holds that $E \cap T = \emptyset$ and $\forall E' \in \sigma(AF')$ it holds that  $E \cap T = T$.
    Then, $\deg^{h}_{min}(AAS) = \frac{1}{2}$ if $|T|$ is even, because $\phi_{\sigma}^{sim}(AF, T, S) = \phi_{\sigma'}^{sim}(AF, T, S) = \frac{1}{2}$, given a set $S$ s.t. $S \subset T$, $|S| = \frac{|T|}{2}$ and for any set $S'$  s.t. $|S' \cap T| \neq \frac{|T|}{2}$ it must hold that $\phi_{\sigma}^{sim}(AF, T, S') \leq \frac{1}{2}$ or $\phi_{\sigma'}^{sim}(AF, T, S') \leq \frac{1}{2}$.
    If $|T|$ is odd, the tight upper bound is lower, as then we have $\deg^{h}_{min}(AAS) = \frac{\lfloor | T |  / 2 \rfloor}{|T|}$, because given the same constraints on our set $S$, if $\phi_{\sigma}^{sim}(AF, T, S) = \frac{1}{2}$ then $\phi_{\sigma'}^{sim}(AF, T, S) = \frac{\lfloor |T| / 2 \rfloor}{|T|}$ and vice versa.
    
\end{proof}

\minmindegchange*
%
\begin{proof}
From Lemma~\ref{lemma:min-min-deg}, we know that for our argumentation-based agreement scenarios $AAS$ and $AAS'$, the tight lower bounds of the degrees of minimal agreement are $\frac{\lfloor |T| / 2 \rfloor}{|T|}$ and $\frac{\lfloor |T'| / 2 \rfloor}{|T'|}$, respectively.
The maxima are obviously $1$.
Because we assume $deg_{min}^h(AAS) \geq deg_{min}^h(AAS')$, it is clear that to maximize $\Delta_{deg_{min}^h}(AAS, AAS')$ we must set $deg_{min}^h(AAS) = 1$ and minimize $deg_{min}^h(AAS')$ by setting it to $\frac{\lfloor |T'| / 2 \rfloor}{|T'|}$.
Thus, we achieve our tight upper bound of $\Delta_{deg_{min}^h}(AAS, AAS') = 1 - \frac{\lfloor |T'| / 2 \rfloor}{|T'|}$.
\end{proof}

\propaasrelaxedtwoa*
\begin{proof}
From Lemma~\ref{lemma:min-min-deg-change} we know that if $deg_{min}^h(AAS) \geq deg_{min}^h(AAS')$ then the tight lower bound of $\Delta_{deg_{min}^h}(AAS, AAS')$ is $1 - \frac{\lfloor |T'| / 2 \rfloor}{|T'|}$. 

Now, because $AAS \preceq_N AAS'$ and hence $T \subseteq T'$ and for each $\sigma_i$, $0 \leq i \leq n$, $\forall E \in \sigma_i(AF)$, $p(AF, AF', E, \sigma_i)$ holds true holds, we know that for $S = T \cap E_\cap|$ it must hold that $S \subseteq T' \cap E'_\cap$, where where $E'_\cap = \{E'_0 \cap ... \cap E'_n | E'_0 \in \sigma_0(AF'), ..., E'_n \in \sigma_n(AF'), \nexists  E''_0 \in \sigma_0(AF'), ..., E''_n \in \sigma_n(AF') \text{ s.t. } |E'_0 \cap ... \cap E'_n| < |E''_0 \cap ... \cap E''_n|\})$. Intuitively, we can say that \emph{agents continue to agree about $T \cap E_\cap$}.
Hence, we conclude that the tight upper bound is $1 - \frac{\lfloor |T'| / 2 \rfloor + |T \cap E_\cap|}{|T'|}$.
\end{proof}
Before we prove Proposition 4.1, we split the proposition into two parts, Proposition~\ref{prelim-prop-a} and Proposition~\ref{prelim-prop-b}.
\begin{customprop}{4.1.a}\label{prelim-prop-a}
Let $VAAS = (VAF, T, \sigma), VAF = (AR, AT, V, val, {\cal P })$, ${\cal P } = \langle P_0, ..., P_n \rangle$ and $VAAS' = (VAF', T', \sigma'), VAF' = (AR', AT', V', val', {\cal P }')$, $\langle P'_0, ..., P'_n \rangle$ be value-based AAS, such that $VAAS \preceq_N VAAS'$. It holds true that there exist two argumentation-based agreement scenarios $AAS = (AF, T, SIG)$ and $AAS' = (AF', T', SIG)$, such that $vdeg^{sim}_{min}(VAAS) = deg^{sim}_{min}(AAS)$ and $vdeg^{sim}_{min}(VAAS') = deg^{sim}_{min}(AAS')$, $AAS \preceq_N AAS'$.
\end{customprop}
\begin{proof}
\phantom{e}
\begin{enumerate}
    \item Let us assume, for a contradiction, that there do not exist two argumentation-based agreement scenarios $AAS = (AF, T, SIG)$ and $AAS' = (AF', T', SIG)$, such that $AAS \preceq_N AAS'$, $vdeg^{sim}_{min}(VAAS) = deg^{sim}_{min}(AAS)$ and $vdeg^{sim}_{min}(VAAS') = deg^{sim}_{min}(AAS')$.
    \item By definition of $vdeg^{sim}_{min}$, it holds true that there exist two argumentation-based agreement scenarios $AAS^* = (AF, T, SIG^*)$, $AAS^{**} = (AF', T', SIG^{**})$, such that:
    \begin{itemize}
        \item $vdeg^{sim}_{min}(VAAS) = deg^{sim}_{min}(vaas(VAAS))$ and $deg^{sim}_{min}(vaas(VAAS)) = deg^{sim}_{min}(AAS^*)$;
        \item $vdeg^{sim}_{min}(VAAS') = deg^{sim}_{min}(vaas(VAAS'))$ and $deg^{sim}_{min}(vaas(VAAS')) = deg^{sim}_{min}(AAS^{**})$;
        \item $SIG^* = \langle \sigma^*_0, ..., \sigma^*_n \rangle$, $\sigma^*_i(AF) = \sigma^*(AF_{P_i})$, $SIG^* = \langle \sigma^{**}_0, ..., \sigma^{**}_n \rangle$, and $\sigma^{**}_i(AF) = \sigma^{**}(AF_{P_i})$.
    \end{itemize}
    \item It follows from 2. that there exist two argumentation-based agreement scenarios $AAS$ and $AAS'$, s.t. $vdeg^{sim}_{min}(VAAS) = deg^{sim}_{min}(AAS)$, $vdeg^{sim}_{min}(VAAS') = deg^{sim}_{min}(AAS')$, such that $AAS = (AF, T, SIG)$, $AAS' = (AF', T', SIG')$, $SIG = SIG'$, $SIG = \langle \sigma_0, ..., \sigma_n \rangle$, $\sigma_i(AF) = \sigma(AF_{P_i})$ and $\sigma_i(AF') = \sigma(AF'_{P'_i})$; hence $AAS \preceq_N AAS'$.
    This contradicts 1. and proves the proposition.
\end{enumerate}
\end{proof}
\begin{customprop}{4.1.b}\label{prelim-prop-b}
Let $VAAS = (VAF, T, \sigma), VAF = (AR, AT, V, val, {\cal P })$, ${\cal P } = \langle P_0, ..., P_n \rangle$ and $VAAS' = (VAF', T', \sigma'), VAF' = (AR', AT', V', val', {\cal P }')$, $\langle P'_0, ..., P'_n \rangle$ be value-based AAS, such that $VAAS \preceq_N VAAS'$. $\sigma$ satisfies weak cautious monotony. There exist two argumentation-based agreement scenarios $AAS = (AF, T, SIG)$, $AAS' = (AF', T', SIG)$, $AAS \preceq_N AAS'$, such that $vdeg^{sim}_{min}(VAAS) = deg^{sim}_{min}(AAS)$ and \\ $vdeg^{sim}_{min}(VAAS') = deg^{sim}_{min}(AAS')$, $SIG = \langle \sigma_0, ..., \sigma_n \rangle$, and each $\sigma_i, 0 \leq i \leq n$ is an argumentation semantics that satisfies weak cautious monotony.
\end{customprop}
\phantom{e}
\begin{proof}
\phantom{e}
\begin{enumerate}
    \item By definition of $vdeg^{sim}_{min}$, it holds true that there exist two argumentation-based agreement scenarios $AAS^* = (AF, T, SIG^*)$, $AAS^{**} = (AF', T', SIG^{**})$, s.t. $vdeg^{sim}_{min}(VAAS) = deg^{sim}_{min}(AAS^*)$ and $vdeg^{sim}_{min}(VAAS') = deg^{sim}_{min}(AAS^{**}))$, $SIG^* = \langle \sigma^*_0, ..., \sigma^*_m \rangle$, $\sigma^*_i(AF) = \sigma^*(AF_{P_i})$, $SIG^* = \langle \sigma^{**}_0, ..., \sigma^{**}_m \rangle$, and $\sigma^{**}_i(AF) = \sigma^{**}(AF_{P_i})$.
    \item From Proposition~\ref{prelim-prop-a} it follows that we can assume $AAS^* \preceq_N AAS^{**}$.
    \item By definition of $vaas$ it holds true for $\sigma^{*}_j$, $\sigma^{**}_j$, $0 \leq j \leq m$ that $\sigma^{*}_j(AF) = \sigma(AF_{P_j}) = \sigma^{**}_j(AF)$ and $\sigma^{**}_j(AF') = \sigma(AF'_{P_j}) = \sigma^{*}_j(AF')$.
    \item Let us observe that by definition of a value-based argumentation framework $VAF^* = (AR^*, AT^*, V^*, val^*, {\cal P }^*)$, ${\cal P }^* = \{P^*_0, ..., P^*_y\}$ it holds true that for each $P^*_x, 0 \leq x \leq y$, $AT^*_{P^*_x} \subseteq AT^*$. 
    \item It follows from  4. that by definition of a subjective argumentation framework in value-based argumentation for every two value-based argumentation frameworks $VAF = (AR, AT, V, val, {\cal P })$, $VAF' = (AR', AT', V', val', {\cal P }')$, s.t. $VAF \preceq_N VAF'$, ${\cal P } = \langle P_0, ..., P_l \rangle, {\cal P } = \langle P_0, ..., P_l \rangle$ and every set of arguments $S \subseteq AR$, it holds true for that if $AR' \setminus AR$ does not attack $S$ in $AF'$, then for all $AF_{P_k} = (AR, AT_{P_k}), AF'_{P'_k} = (AR', AT'_{P'_k}), 0 \leq k \leq l$, it holds true that $AR' \setminus AR$ does not attack $S$ in $AF'_{P'_k}$.
    \item From 2. and 4. it follows that by definition of weak cautious monotony, if $\sigma$ satisfies weak cautious monotony, then each $\sigma_i, 0 \leq i \leq n$ is an argumentation semantics that satisfies weak cautious monotony.
    This proves the proposition.
\end{enumerate}
\end{proof}
\noindent Let us again state Proposition~\ref{prelim-prop} of the paper.
\prelimprop*

\begin{proof}
The proof is direct by Propositions~\ref{prelim-prop-a}~and~\ref{prelim-prop-b}.
\end{proof}

%
\vaaspropone*
%
\begin{proof}
By definition of $vdeg_{min}$, it holds true that $vdeg^{h}_{min}(VAAS) = deg^{h}_{min}(vaas(VAAS))$ and $vdeg^{h}_{min}(VAAS') = deg^{h}_{min}(vaas(VAAS'))$.
From Proposition~\ref{prelim-prop} it follows that for every two value-based AAS $VAAS, VAAS'$,  such that $VAAS \preceq_N VAAS'$, it holds true that there exist argumentation-based agreement scenarios $AAS = vaas(VAAS), AAS' = vaas(VAAS')$, such that $AAS = (AF, T, SIG)$ and $AAS' = (AF', T', SIG')$, $AAS \preceq_N AAS'$, $SIG = SIG' = \langle \sigma_0, ..., \sigma_n \rangle$ and for every $\sigma_i, 0 \leq i \leq n$, $\forall E \in \sigma_i(AF)$, it holds true that $cm(AF_{P_i}, AF'_{P_i}, E, \sigma_i)$ holds true. Hence, the proof follows from Proposition~\ref{proposition-3a}.
\end{proof}
%
\vaasproptwo*
%
\begin{proof}
By definition of $vdeg^{h}_{min}$, it holds true that $vdeg^{h}_{min}(VAAS) = deg^{h}_{min}(vaas(VAAS))$ and $vdeg^{h}_{min}(VAAS') = deg^{h}_{min}(vaas(VAAS'))$.
From Proposition~\ref{prelim-prop} it follows that for every two value-based AAS $VAAS, VAAS'$,  such that $VAAS \preceq_N VAAS'$, it holds true that there exist argumentation-based agreement scenarios $AAS = vaas(VAAS), AAS' = vaas(VAAS')$, such that $AAS = (AR, T, SIG)$ and $AAS' = (AF', T', SIG')$, $AAS \preceq_N AAS'$, $SIG = SIG' = \langle \sigma_0, ..., \sigma_n \rangle$ and for every $\sigma_i, 0 \leq i \leq n$, $\forall E \in \sigma_i(AF)$, it holds true that $cm(AF_{P_i}, AF'_{P_i}, E, \sigma_i)$ holds true. Hence, the proof follows from Proposition~\ref{proposition-4a}.
\end{proof}
\end{appendices}



\end{document}